%% file: arxiv.tex
\documentclass{article}
\usepackage{amsmath,amsfonts,amssymb,amsthm,bbm,booktabs, setspace}
\usepackage[normalem]{ulem}
\usepackage{algorithm} 
\usepackage{algpseudocode} 
\usepackage{mathtools}
\usepackage{commath}
\usepackage{graphicx}
\usepackage{xcolor}
\usepackage{setspace}
\usepackage{hyperref}
\usepackage[margin = 1 in]{geometry}
\newcommand\mc{\mathcal}

\newtheorem{definition}{Definition}[section]
\newtheorem{theorem}{Theorem}[section]

\newtheorem{lemma}{Lemma}[section]
\newtheorem{example}{Example}[section]

\newtheorem{problem}{Problem}[section]
\newtheorem{assumption}{Assumption}[section]

\newcommand{\supp}{\mathrm{supp}}

\newcommand{\sign}{\mathrm{sign}}

\title{Efficient Support Recovery of Mixtures of Sparse Linear Classifiers with Fewer Measurements}
\author{Xiaxin Li \and Arya Mazumdar
\thanks{The authors are with the Hal{\i}c{\i}o\u{g}lu Data Science Institute, University of California, San Diego. emails: \texttt{\{xil095, arya\}@ucsd.edu}.
}
}
\date{}
\begin{document}

\maketitle
\input{0_abstract}

\input{1_introduction}

%\input{2_problemdescription}

\input{4_twostage}

\input{5_multistage}
\input{6_nonadaptive}
\input{7_conclusion}

%\textcolor{red}{Acknowledgements TBD}

\bibliographystyle{alpha}
\bibliography{references}

\input{Appendices}

\end{document}

%% file: 0_abstract.tex
\begin{abstract}
The support recovery problem in mixture of linear classifiers aims to identify 
% \sout{which features actually matter} 
the features relevant to the underlying decision rules when data is generated by a mixture of several linear decision rules. In particular, the goal is to recover the support (nonzero coordinates) of $l$ unknown $k$-sparse vectors from sign measurements. Each measurement is generated by selecting one of the $l$ vectors uniformly at random, and returning the sign of its inner product with a chosen measurement vector.

In this paper, we propose adaptive and non-adaptive schemes that significantly improve upon prior results by simultaneously reducing the number of measurements and achieving sublinear decoding time. In particular, our adaptive constructions substantially reduce measurements compared to existing approaches, while also lowering decoding complexity from super-quadratic to sublinear in the ambient dimension. We further provide a non-adaptive scheme that improves previous measurement bounds while maintaining efficient decoding.

Overall, our approach yields a more efficient trade-off between sample complexity and decoding time for support recovery in mixture models compared to previously known methods.

\end{abstract}

%% file: 1_introduction.tex
\section{Introduction}\label{sec:intro}
Learning a halfspace (linear binary classifier) is a fundamental problem in machine learning, where the goal is to identify a hyperplane that separates data points according to their labels. A natural extension imposes a sparsity constraint on the classifier by assuming that the normal vector in $\mathbb{R}^n$ has support of size at most $k\ll n$. This formulation is equivalent to the problem of $1$-bit compressed sensing (1bCS)~\cite{boufounos20081,acharya2017improved,matsumoto2023improved,li2026support}, which seeks to recover a sparse signal from quantized (sign) measurements. A further extension considers a mixture of $l$ sparse classifiers. In this setting, each label is generated by first selecting one of the $l$ classifiers uniformly at random and then observing on which side of the selected hyperplane the feature vector lies~\cite{sun2014learning,yin2018learning}. This problem, known as the recovery of mixtures of sparse linear classifiers (MLC), has attracted considerable attention~\cite{gandikota2020recovery,pal2021support,polyanskii2021learning,chen2022algorithms,mazumdar2024support}. Existing approaches typically recover the supports of the classifiers first and then estimate the corresponding coefficients. Since the first stage dominates the measurement complexity~\cite{gandikota2020recovery}, we focus on the support recovery problem. 

In this paper, we propose new support recovery schemes for the MLC model with significantly improved efficiency. Our contributions are twofold. First, we substantially reduce the decoding complexity from the $\Omega(n^2)$ required by existing methods to sublinear in $n$, by leveraging recent advances in group testing and one-bit compressed sensing~\cite{cai2017efficient,price2020fast,lee2019saffron,price2023fast,li2025noisy,li2026support}. Second, we also reduce the number of measurements. By allowing a small amount of adaptivity, our schemes improve the measurement complexity by more than a factor of $k$. Even in the non-adaptive setting, we achieve nearly a factor of $l$ improvement over prior work~\cite{gandikota2020recovery,pal2021support}.

\subsection{Formal problem: Support recovery of mixtures of $k$-sparse linear classifiers}\label{sec:MLCk}

We begin with some notation. Let $[n]:=\{1,\dots,n\}$ for $n\in\mathbb{N}^+$, and let $\lceil r\rceil$ denote the ceiling of $r\in\mathbb{R}$. For $v\in\mathbb{R}^n$ (or $\{0,1\}^n$), define
\(
\supp(v):=\{i\in[n]:v_i\neq0\}.
\)

Let $\sign(a)=1$ if $a\ge0$ and $\sign(a)=-1$ otherwise. We write $\langle\cdot,\cdot\rangle$ for the inner product, $|S|$ for the cardinality of a set $S$, and $\varnothing$ for the empty set.

Let
\[
B:=\{\beta_i\in\mathbb{R}^n:i\in[l]\}
\]
be a collection of $l$ unknown $k$-sparse vectors, called \emph{candidate vectors}, where $|\supp(\beta_i)|\le k$ for every $i\in[l]$ and $k\ll n$. The goal is to recover the supports of all candidate vectors from a sequence of carefully designed \emph{measurement vectors}
\[
V:=\{v_j\in\mathbb{R}^n:j\in[m]\}.
\]
For each query $j\in[m]$, an oracle $\mathcal{O}$ independently selects an index $i\in[l]$ uniformly at random and returns
\[
y_j=\sign\bigl(\langle v_j,\beta_i\rangle\bigr).
\]
The responses form the \emph{result vector}
\[
y=[y_1,\dots,y_m]^T.
\]
Some of our schemes are \emph{adaptive}, meaning that the measurement vectors are chosen in multiple stages, where the measurements in each stage may depend on the outcomes of the previous stages. We now formally state the problem.

\begin{problem}\label{Problem1}
(Support recovery for mixtures of sparse classifiers, cf.~\cite{gandikota2020recovery})
Given query access to the oracle $\mathcal{O}$, design a (possibly adaptive) measurement and decoding scheme that outputs vectors $\{\hat{\beta}_1,\dots,\hat{\beta}_l\}$ such that there exists a permutation $\sigma:[l]\to[l]$ satisfying
\[
\supp(\hat{\beta}_{\sigma(i)})=\supp(\beta_i),
\qquad \forall\, i\in[l].
\]
\end{problem}

We represent the measurement vectors by a matrix $A\in\mathbb{R}^{m\times n}$ whose $j$-th row is $A^j=v_j^T$, allowing the measurement process to be written compactly in terms of $A$. We also impose the following standard identifiability assumption.

\begin{assumption}\label{Assumption1}
(Assumption~1 in~\cite{gandikota2020recovery})
For every $i\in[l]$,
\[
\supp(\beta_i)\nsubseteq\bigcup_{j\neq i}\supp(\beta_j),
\]
i.e., each candidate vector contains at least one support element that does not appear in any other candidate vector.
\end{assumption}

\subsection{Contributions and techniques}

We study the support recovery problem for mixtures of $l$ $k$-sparse, $n$-dimensional linear classifiers under the assumption that each support contains at least one coordinate unique to that classifier. We develop three recovery schemes with different trade-offs between adaptivity, measurement complexity, and decoding complexity.

We first propose a {\em two-stage adaptive scheme} with
% \[
% m=O\!\left(k^2l^{4+\epsilon}\log^2(n)\log(kl\log(n)/\delta)\right)
% \]

\[
m=O\!\left(k^2l^4\log^2(n)\log(kl\log(n)/\delta)\right)
\]
measurements and decoding complexity $O(km)$, where 
% \sout{$\epsilon=0.001$ can be made arbitrarily small and }
$\delta=1/\lambda>0$ is the failure probability (Section~\ref{sec:2SUMLC}, Theorem~\ref{thm1}). The first stage reduces the problem to recovering the union support using the EDOCS-AE framework of~\cite{li2026support}, a universal one-bit compressed sensing scheme with fast decoding. The second stage then reconstructs each candidate vector by analyzing the frequencies of single coordinates and coordinate pairs within the recovered union support.

We next develop a {\em multi-stage adaptive scheme} with
% \[
% m=O\!\left(kl^{4+\epsilon}\log(kl\lambda)\log(n)\log(kl\lambda\log n)\right)
% \]
\[
m=O\!\left(kl^4\log(kl\lambda)\log(n)\log(kl\lambda\log n)\right)
\]
measurements and decoding complexity $O(m)$ (Section~\ref{sec:3SPMLC}, Theorem~\ref{thm2}). The improved measurement complexity is achieved by replacing EDOCS-AE with the probabilistic EDOCS-EE scheme~\cite{li2026support} and by introducing a small number of additional adaptive stages. These stages enable the efficient clustering algorithm of~\cite{black2025learning}, which reduces the number of pairwise queries from $O(n^2)$ to $O(nk)$ while requiring only $O(\log\log n)$ adaptive stages (Here, $n$ and $k$ refer to the notations used in~\cite{black2025learning}, which have meanings analogous to our corresponding notations).

Finally, we develop a {\em non-adaptive scheme} with
% \[
% m=O\!\left(k^3l^{5+\epsilon}\log^2(n)\log(kl\lambda\log n)\right)
% \]

\[
m=O\!\left(k^3l^5\log^2(n)\log(kl\lambda\log n)\right)
\]
measurements and decoding complexity $O(klm)$ (Section~\ref{sec:1SMLC}, Theorem~\ref{thm3}). The construction uses $(d,r,z]$-disjunct matrices~\cite{chen2008upper} to perform the same frequency-based analysis non-adaptively. Compared with the best previous result~\cite{gandikota2020recovery}, it reduces the number of measurements by nearly a factor of $l$ while achieving decoding complexity sublinear in $n$.

\input{3_relatedworks}

% \xl{
% \paragraph{Useful Notations.}
% For $A \in \mathbb{R}^{m \times n}$, let $A^i$ and $A_j$ denote its $i$-th row and $j$-th column, and $A_{ij}$ its entries. For $I \subseteq [n]$, let $\mathrm{ind2vec}(I) \in \{0,1\}^n$ be its indicator vector.
% Let $I_n$ be the $n \times n$ identity and $0^n$ the all-zero vector. 

% For $S=\{s_1,...,s_{|S|}\} \subseteq [m]$, $T=\{t_1,...,t_{|T|}\} \subseteq [n]$, let
% \[
% A^{S} := [A^{s_1};\dots;A^{s_{|S|}}], 
% A_{T} := [A_{t_1},\dots,A_{t_{|T|}}].
% \]

% For $A \in \mathbb{R}^{m \times n}$, define $\mathrm{Rep}(A,R)\in\mathbb{R}^{mR\times n}$ by repeating each row of $A$ exactly $R$ times:
% \[
% \mathrm{Rep}(A, R)
% :=
% [\underbrace{A^1; \dots; A^1}_{R\ \text{times}};
% \underbrace{A^2; \dots; A^2}_{R\ \text{times}};
% \dots;
% \underbrace{A^m; \dots; A^m}_{R\ \text{times}}].
% \]
% For $V=\{v^{(i)}\}_{i=1}^l \subset \mathbb{R}^n$, define
% \[
% \mathrm{Freq}(j,V)=|\{i : j \in \supp(v^{(i)})\}|,\quad
% \mathrm{Freq}((j_1,j_2),V)=|\{i : \{j_1,j_2\}\subseteq \supp(v^{(i)})\}|.
% \]

% The full proof of lemmas and theorems can be found in the appendix.
% }

\paragraph{Useful Notations.}
For $A\in\mathbb{R}^{m\times n}$, let $A^i$, $A_j$, and $A_{ij}$ denote its $i$-th row, $j$-th column, and $(i,j)$-th entry, respectively. For $I\subseteq[n]$, let $\mathrm{ind2vec}(I)\in\{0,1\}^n$ denote its indicator vector. Let $I_n$ and $0^n$ denote the identity matrix and the all-zero vector, respectively.

For $S=\{s_1,\dots,s_{|S|}\}\subseteq[m]$ and $T=\{t_1,\dots,t_{|T|}\}\subseteq[n]$, define
\[
A^S:=[A^{s_1};\dots;A^{s_{|S|}}],
\qquad
A_T:=[A_{t_1},\dots,A_{t_{|T|}}].
\]

For $A\in\mathbb{R}^{m\times n}$, let $\mathrm{Rep}(A,R)\in\mathbb{R}^{mR\times n}$ denote the matrix obtained by repeating every row of $A$ exactly $R$ times:
\[
\mathrm{Rep}(A,R)
:=
[\underbrace{A^1;\dots;A^1}_{R\text{ times}};
\underbrace{A^2;\dots;A^2}_{R\text{ times}};
\dots;
\underbrace{A^m;\dots;A^m}_{R\text{ times}}].
\]
For $V=\{v^{(i)}\}_{i=1}^l\subset\mathbb{R}^n$, define
\[
\mathrm{Freq}(j,V):=
|\{i:j\in\supp(v^{(i)})\}|,
\qquad
\mathrm{Freq}((j_1,j_2),V):=
|\{i:\{j_1,j_2\}\subseteq\supp(v^{(i)})\}|.
\]
Complete proofs of the  lemmas and theorems can be found in the appendix.

%% file: 3_relatedworks.tex
\paragraph{Comparison with the MLC scheme in \cite{gandikota2020recovery}.}
The work most closely related to ours is~\cite{gandikota2020recovery}, which provides the previous best support recovery result for Problem~\ref{Problem1} under Assumption~\ref{Assumption1}. Their non-adaptive scheme uses
\[
m=O(k^3l^6\log(n)\log(kln))
\]
measurements, has decoding complexity $\Omega(n^2)$, and succeeds with failure probability $O(n^{-2})$ (Theorem~1 therein). Their decoding strategy estimates quantities of the form $|\mc{S}(i)|$ and $|\mc{S}(i)\cup\mc{S}(j)|$ for all coordinate pairs $(i,j)\in[n]^2$, where $\mc{S}(i)$ denotes the set of candidate vectors containing coordinate $i$. Recovering these quantities requires the stated number of measurements, while the decoding complexity is dominated by computing all entries of the $n\times n$ matrix $XX^T$, where $X\in\mathbb{R}^{n\times l}$ represents the candidate vectors in matrix form.

In comparison, all of our schemes achieve polynomial improvements in $k$ and $l$ in measurement complexity, while reducing the decoding complexity to $\operatorname{poly}(k,l,\log n)$. This improvement is enabled by two key ideas. First, instead of relying on combinatorial constructions such as RUFF~\cite{acharya2017improved}, we use one-bit compressed sensing matrices from~\cite{li2026support} as fundamental building blocks. These measurement matrices are associated with fast decoding algorithms and are particularly compatible with the MLC setting, whereas directly applying other 1bCS constructions~\cite{acharya2017improved,matsumoto2023improved,matsumoto2024robust} can be challenging.

Second, our schemes avoid processing irrelevant coordinates. Since the union of all supports contains at most $kl$ coordinates, the decoding procedure only operates on this reduced set, whereas the approach of~\cite{gandikota2020recovery} considers all $n$ coordinates, leading to quadratic decoding complexity in terms of $n$.

%% file: 4_twostage.tex
% \xl{\section{MLC: Two-stage scheme}\label{sec:2SUMLC}
% In this section, we present our two-stage scheme for solving Problem~\ref{Problem1} under Assumption~\ref{Assumption1}. At a high level, the first stage recovers the support of the union of all candidate vectors (the \emph{union support}), while the second stage identifies each candidate vector individually by analyzing the frequencies of single coordinates and pairs of coordinates within the recovered union support.
% }

\section{MLC: Two-stage scheme}\label{sec:2SUMLC} 
In this section, we present our two-stage scheme for solving Problem~\ref{Problem1} under Assumption~\ref{Assumption1}. Define the \emph{union support}
\[
\beta:=\bigcup_{i=1}^{l}\supp(\beta_i),
\]
as the union of the supports of all candidate vectors. At a high level, the first stage recovers $\beta$, while the second stage identifies individual candidate vectors by analyzing the frequencies of single coordinates and coordinate pairs within $\beta$.

% For the first stage, we use the EDOCS-AE measurement matrix $A$ from~\cite{li2026support} with sparsity parameter $kl$. This measurement matrix originates from~\cite[Theorem~2.3.2]{li2026support}, claiming that there exists a matrix $A\in\mathbb{R}^{m'\times n}$ with
% \[
% m'=O(k^2l^2\log^2 n)
% \]
% measurements and a recovery algorithm with runtime $D=O(kl\,m')$ that universally recovers the support of any $kl$-sparse real-valued signal.

For the first stage, we use the EDOCS-AE measurement matrix $A$ from~\cite{li2026support} with sparsity parameter $kl$. Specifically,~\cite[Theorem~2]{li2026support} establishes the existence of a matrix $A\in\mathbb{R}^{m'\times n}$ with

\[
m'=O(k^2l^2\log^2 n)
\]
measurements, together with a recovery algorithm with runtime $D=O(klm')$ that universally recovers the support of any $kl$-sparse real-valued signal.

Here, a key difference from standard one-bit compressed sensing is that measurements are not obtained from a fixed sparse vector. Instead, each measurement independently samples one of the $l$ candidate vectors uniformly at random, so repeated measurements of the same vector may correspond to different candidates. To overcome this randomness, we repeat each measurement vector of $A$ sufficiently many times so that every candidate vector is sampled at least once with high probability.

For a fixed measurement vector $A^j$, we define the \emph{AC event} (All Covered) as the event that every pair $(A^j,\beta_i)$, $i\in[l]$, appears at least once among the repetitions. The \emph{global AC event} is the event that the AC event holds simultaneously for all $j\in[m']$.

To exploit this event, we repeat each measurement vector $A^j$ and its negation $-A^j$ exactly $R$ times. Suppose that the AC event holds for both $A^j$ and $-A^j$. If
\(
\supp(A^j)\cap\beta\neq\varnothing,
\)
then there exists some $\beta_i$ satisfying
\(
\supp(A^j)\cap\supp(\beta_i)\neq\varnothing,
\)
and hence $\langle A^j,\beta_i\rangle\neq0$ (provided that the accidental zero issue does not arise; see below for its definition). Therefore, both
\(
\sign(\langle A^j,\beta_i\rangle) \text{ and }
\sign(\langle -A^j,\beta_i\rangle)
\)
appear among the repeated measurements, and one of them must equal $-1$. Thus, after aggregation, each measurement behaves like a group test: all $1$'s indicate that the measurement is disjoint from the union support, while the presence of a $-1$ indicates an intersection.

The following lemma gives the number of repetitions required to guarantee the global AC event with high probability.

\begin{lemma}\label{lem:rep1}
Suppose each measurement vector $A^j$ and its negation $-A^j$ are repeated
\[
R:=\left\lceil l\log(2m'l\lambda)\right\rceil
\]
times. Then the global AC event occurs with probability at least $1-1/\lambda$. Equivalently, for every $j\in[m']$ and $i\in[l]$, both
\[
\sign(\langle A^j,\beta_i\rangle)
\quad\text{and}\quad
\sign(\langle -A^j,\beta_i\rangle)
\]
are observed at least once.
\end{lemma}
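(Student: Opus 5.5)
The plan is a coupon-collector estimate followed by a union bound over all bad events. There are $2m'$ measurement vectors in total: each $A^j$ and each $-A^j$ for $j\in[m']$. For each of them and each index $i\in[l]$, let $E_{v,i}$ be the event that, among the $R$ repetitions of the vector $v\in\{A^j,-A^j\}$, the oracle never selects $\beta_i$. The global AC event, taken for both $A^j$ and $-A^j$ as the lemma requires, fails exactly when some $E_{v,i}$ occurs. So it suffices to show
\[
\sum_{j\in[m']}\sum_{v\in\{A^j,-A^j\}}\sum_{i\in[l]}\Pr(E_{v,i})\le \frac{1}{\lambda}.
\]

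Fix $v$ and $i$. The oracle chooses its index independently and uniformly at random on each query, so
\[
\Pr(E_{v,i})=\left(1-\frac1l\right)^R\le e^{-R/l}.
\]
With $R=\lceil l\log(2m'l\lambda)\rceil\ge l\log(2m'l\lambda)$, this gives
\[
\Pr(E_{v,i})\le e^{-\log(2m'l\lambda)}=\frac{1}{2m'l\lambda}.
\]
The union bound runs over $2m'\cdot l$ pairs $(v,i)$, so the total failure probability is at most $2m'l\cdot\frac{1}{2m'l\lambda}=1/\lambda$.

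For the ``equivalently'' part, observe that on the global AC event every pair $(v,\beta_i)$ is queried at least once. The observed response on such a query is $\sign(\langle v,\beta_i\rangle)$, so both $\sign(\langle A^j,\beta_i\rangle)$ and $\sign(\langle -A^j,\beta_i\rangle)$ appear among the responses.

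I do not expect a real obstacle. The only points to check are that the oracle's choices are independent across all queries, which holds by the model of Section~\ref{sec:MLCk}, and that the factor $2$ from including the negations is already absorbed into the logarithm in $R$. The logarithm here is natural, which the inequality $1-x\le e^{-x}$ requires.
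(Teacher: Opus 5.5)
Your proof is correct and follows essentially the same route as the paper. Both bound the probability that the $R$ repetitions of a fixed vector $\pm A^j$ never select a given $\beta_i$ by $(1-1/l)^R\le e^{-R/l}$, then take a union bound over the $l$ candidates and the $2m'$ vectors to get $2m'l\,e^{-R/l}\le 1/\lambda$; your explicit justification of the ``equivalently'' clause is a small addition that the paper leaves implicit.
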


Accordingly, we define the first-stage measurement matrix as
\(
A^{(1)}:=[\mathrm{Rep}(A,R);\mathrm{Rep}(-A,R)].
\)

The resulting measurement vector is partitioned as
\(
y=[y_1^1;y_2^1;\cdots;y_{m'}^1;
   y_1^2;y_2^2;\cdots;y_{m'}^2],
\)
where each $y_i^j\in\mathbb{R}^R$ for $i\in[m']$ and $j\in\{1,2\}$. 

We construct the \emph{tentative result vector}
\(
y^0=[y_1^0;y_2^0;\cdots;y_{m'}^0],
\)
with
\(
y_i^0=
\begin{cases}
0,&\text{if }y_i^1=y_i^2=1^R,\\
1,&\text{otherwise.}
\end{cases}
\)

The union support $\beta$ is then recovered from $y^0$ and $A$ using the decoder from~\cite[Algorithm~3]{li2026support}.

A subtle issue is the possibility of an \emph{accidental zero}~\cite{li2026support}: a measurement vector $v$ may satisfy
\[
\supp(v)\cap\beta\neq\varnothing,
\]
while still having
\[
\langle v,\beta_i\rangle=0
\]
for every $\beta_i\in B$. In this case, all repetitions of both $v$ and $-v$ return $1$, causing the corresponding entry of $y^0$ to be incorrectly set to zero. The following lemma shows that such accidental zeros do not affect the correctness of the union support recovery.

\begin{lemma}\label{lem:trueb}
    Given the success of the global AC event, by using $y^0$ and $A$, applying \cite[Algorithm~3]{li2026support} will yield the true union support $\beta$. 
\end{lemma}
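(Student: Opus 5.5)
The plan is to reduce the statement to the accidental-zero robustness guarantee that~\cite{li2026support} proves for EDOCS-AE. Throughout, fix an outcome in which the global AC event of Lemma~\ref{lem:rep1} holds.

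\textbf{Step 1: the value of $y^0_j$.} Assume the AC event holds for both $A^j$ and $-A^j$. Then the multiset of outcomes in $y_j^1$ and $y_j^2$ contains $\sign(\langle A^j,\beta_i\rangle)$ and $\sign(\langle -A^j,\beta_i\rangle)$ for every $i\in[l]$. So $y_j^0=0$ holds exactly when $\langle A^j,\beta_i\rangle=0$ for all $i$. Indeed, if $\langle A^j,\beta_i\rangle\neq 0$ for some $i$, one of the two signs is $-1$. Conversely, if every inner product vanishes, every response is $\sign(0)=1$. This gives three consequences:
\begin{itemize}
\item If $\supp(A^j)\cap\beta=\varnothing$, then $y^0_j=0$, which is the correct negative group-test outcome.
\item If $y^0_j=1$, then $\supp(A^j)\cap\beta\neq\varnothing$, since some inner product is nonzero.
\item The only possible error is an accidental zero. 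In that case $\supp(A^j)\cap\beta\neq\varnothing$ but $y^0_j=0$.
\end{itemize}
In short, $y^0$ is a one-sided-error version of the Boolean group-testing vector of $\beta$ under $A$: false positives never occur, and false negatives occur only at accidental zeros.

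\textbf{Step 2: match the single-signal model of~\cite{li2026support}.} I would argue that this error pattern is exactly the one Algorithm~3 is designed to tolerate. In the 1bCS setting of~\cite{li2026support}, a single $kl$-sparse real signal $x$ is measured, and the tentative test outcome is $0$ iff $\langle A^j,x\rangle=0$. Their correctness theorem for EDOCS-AE (Theorem~2) is universal over all $kl$-sparse signals. It should therefore rest on a combinatorial property of $A$, such as disjunctness or the structure of the decoding rows, which guarantees correct recovery of $\supp(x)$ whenever the zero tests are precisely the rows with vanishing inner product. I would apply that property to the set $\beta$, with $|\beta|\le kl$.

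\textbf{Step 3: the main obstacle.} The difficulty is that the zero pattern here comes from $l$ different vectors rather than one. A row is zero iff $\langle A^j,\beta_i\rangle=0$ for all $i$, and this is not literally the zero pattern of any single real vector.

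The first thing I would try is a random generic combination $x=\sum_i c_i\beta_i$. For generic $c$, we get $\langle A^j,x\rangle=0$ iff all $\langle A^j,\beta_i\rangle=0$, since finitely many rows each exclude a measure-zero set of $c$. Hence $y^0$ coincides exactly with the tentative vector produced by the single signal $x$. The catch is that $\supp(x)\subseteq\beta$, possibly strictly, because cancellation across the $\beta_i$ can zero out a coordinate. So this reduction recovers $\supp(x)$ rather than $\beta$.

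To close the gap, I would open up the argument of~\cite{li2026support}, where the rows of $A$ are real-valued with algebraically generic entries, such as $\pm$ distinct weights. Their accidental-zero lemma shows that a coordinate $p\in\supp(x)$ is identified through a designated row where $p$ is isolated, or through rows whose vanishing forces an algebraic relation the design rules out. The same argument should apply to each $\beta_i$ separately. Each $p\in\beta$ lies in some $\supp(\beta_i)$, and the test used to certify $p$ is nonzero for that $\beta_i$, hence has $y^0_j=1$. The tests used to exclude $q\notin\beta$ have $\supp(A^j)\cap\beta=\varnothing$ and are therefore zero.

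So the decoder's acceptance of $p$ is monotone under taking the OR over $i$ of the per-vector test outcomes, and it outputs exactly $\bigcup_i\supp(\beta_i)=\beta$. Verifying this monotonicity, meaning that Algorithm~3's rule depends only on positive certifying tests and on tests disjoint from the support, is the step I expect to need the most care.
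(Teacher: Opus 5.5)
Your Step~1 is correct, and it is exactly the one-sided-error fact the paper relies on (in its notation, $\supp(y'')\subseteq\mathcal{C}$). The gap is in the route you finally commit to. You claim that the accept/reject rule of~\cite[Algorithm~3]{li2026support} is monotone, accepting a coordinate through positive tests that isolate it and rejecting through tests disjoint from $\beta$. That claim is not verified, and it is essentially the entire content of the lemma. Your account of the internals of~\cite{li2026support} (generic weights, an algebraic accidental-zero lemma) is also speculative rather than derived from the actual construction.

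The paper supplies this step by opening the decoder, with $A=[A';A'']$. In the first decoding phase, each $b\in\beta$ has a row $v$ of the $(kl,1)$-distinguishable matrix $M$ underlying $A'$ with $\supp(v)\cap\beta=\{b\}$. That row is positive in $y^0$ because $\langle v,\beta_i\rangle\neq0$ for any $\beta_i$ containing $b$, so $b$ is identified. In the pruning phase, the $(n,m'',d'',kl,1,1/2)$ property gives each $b\in\beta$ at least $d''/2$ rows in $\supp(A''_b)\setminus\mathcal{C}_{-b}$, all of them positive. Any $s\notin\beta$ satisfies $|\mathcal{C}\cap\supp(A''_s)|<d''/2$, so by one-sided error it has fewer than $d''/2$ positive tests and is discarded.

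You did not actually need to open the decoder: the reduction you discarded in Step~3 is correct, and the stated catch is false. For each $p\in\beta$, the map $c\mapsto x_p=\sum_i c_i(\beta_i)_p$ is a nonzero linear functional on $\mathbb{R}^l$, since some $(\beta_i)_p\neq0$. It therefore vanishes only on a hyperplane. Exclude these at most $kl$ coordinate hyperplanes along with the at most $m'$ row hyperplanes you already avoided. A generic $c$ then gives $\supp(x)=\beta$ exactly, so $x$ is $kl$-sparse. By Step~1, it also gives $y^0_j=1$ if and only if $\langle A^j,x\rangle\neq0$, for every $j\in[m']$. Hence $y^0$ is precisely the tentative vector that the single $kl$-sparse real signal $x$ would produce. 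The universal guarantee~\cite[Theorem~2]{li2026support}, read as the paper itself uses it (a statement about the decoder's output on this zero/nonzero pattern), then returns $\supp(x)=\beta$. This black-box argument is shorter than the paper's and uses only the guarantee of EDOCS-AE, not its internal structure. The paper's route instead re-checks robustness of the specific two-phase decoder, which is what your Step~3 sketch would have had to reproduce.
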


% \xl{Having obtained the union support $\beta$, we proceed to the second stage of our scheme. The main idea is to recover each $\supp(\beta_i)$ by examining the \emph{frequency} of individual coordinates in $\beta$ and the frequency of coordinate pairs within $\beta$ across the set $B$. In general, these two types of frequencies are not sufficient to recover $B$ completely. However, under Assumption~\ref{Assumption1}, this approach becomes effective: We first identify all \emph{singletons} as the set $S$, i.e., coordinates that appear in exactly one candidate vector. (These singletons are analogous in spirit, but not identical, to the notion of singletons in \cite{li2026support}.) This is done by repeatedly querying all $b \in \beta$ and selecting coordinates that appear with probability approximately $1/l$. Next, we query all pairs of coordinates within $S$ to determine how singletons are distributed across $B$. Finally, for each candidate vector, we query pairs $(s,t)$ with $s \in S$ and $t \in \beta \setminus S$ to recover the remaining coordinates. Notably, it suffices to select one singleton from each candidate vector, rather than iterating over all singletons.}

Having recovered the union support $\beta$, we proceed to the second stage of our scheme, whose goal is to recover each individual support $\supp(\beta_i)$. The key idea is to exploit the \emph{frequencies} of individual coordinates and coordinate pairs within $\beta$ across the candidate vectors. While these frequency statistics alone are generally insufficient to uniquely recover $B$, Assumption~\ref{Assumption1} guarantees the existence of coordinates that uniquely identify each candidate vector. 

Specifically, we first identify the set $S$ of all \emph{singletons}, namely, coordinates that appear in exactly one candidate vector. (This notion is analogous in spirit, but not identical, to the singletons in~\cite{li2026support}.) We obtain $S$ by repeatedly querying each coordinate in $\beta$ and selecting those whose frequency is approximately $1/l$. Next, we examine all pairs of singleton coordinates to determine their assignment among the candidate vectors. Finally, for each partially recovered candidate vector, we query pairs $(s,t)$ with $s\in S$ and $t\in\beta\setminus S$ to identify its remaining coordinates. Note that for each candidate vector, only one singleton representative is needed in this process. 

% \xl{
% \begin{example}
% Let $l=3$, $n=10$, and
% \[
% \beta_1 = [0;1;0;0;0;0;1;1;0;0],\quad
% \beta_2 = [0;0;1;0;0;0;0;1;1;0],\quad
% \beta_3 = [1;0;0;1;1;0;1;1;1;0].
% \] 
% In the decoding algorithm, we first recover the union support 
% $
% \beta = \{1,2,3,4,5,7,8,9\},
% $ 
% and then identify the singletons
% $
% S = \{1,2,3,4,5\}.
% $ 
% Next, we query all pairs of coordinates in $S$. Observing that 
% $
% \text{Freq}((1,4),B) = \text{Freq}((1,5),B) = \text{Freq}((4,5),B) = 1
% $ 
% and all other pairs have zero frequency, we deduce the singleton distribution as $\{2\}, \{3\}, \{1,4,5\}$. Accordingly, we set 
% \[
% \beta_1 = [0;1;0;0;0;0;?;?;?;0],\quad
% \beta_2 = [0;0;1;0;0;0;?;?;?;0],\quad
% \beta_3 = [1;0;0;1;1;0;?;?;?;0].
% \] 
% Finally, to recover the remaining unknown coordinates, we query $(s,t)$ for $s \in \{1,2,3\}$ and $t \in \{7,8,9\}$, yielding the complete supports of $\beta_1$, $\beta_2$, and $\beta_3$. (Only one singleton from $\{1,4,5\}$ needs to be chosen.)
% \end{example}
% }

\begin{example}
Let $l=3$, $n=10$, and
\[
\beta_1=[0;1;0;0;0;0;1;1;0;0],\quad
\beta_2=[0;0;1;0;0;0;0;1;1;0],\quad
\beta_3=[1;0;0;1;1;0;1;1;1;0].
\]
The first stage recovers the union support
\(
\beta=\{1,2,3,4,5,7,8,9\}.
\)

In the second stage, we first identify the singleton set
\(
S=\{1,2,3,4,5\}.
\)

We then examine all pairs of coordinates in $S$. Since
\(
\mathrm{Freq}((1,4),B)
=\mathrm{Freq}((1,5),B)
=\mathrm{Freq}((4,5),B)=1,
\)
while every other pair of singletons has frequency zero, we recover the singleton partition
\(
\{1,4,5\},\{2\},\{3\}.
\)

Therefore, up to a permutation of the candidate vectors, we obtain
\[
\beta_1=[0;1;0;0;0;0;?;?;?;0],\quad
\beta_2=[0;0;1;0;0;0;?;?;?;0],\quad
\beta_3=[1;0;0;1;1;0;?;?;?;0].
\]
Finally, we query pairs $(s,t)$ with $s\in\{1,2,3\}$ and $t\in\{7,8,9\}$ to recover the remaining coordinates and hence the complete supports. Note that only one representative singleton from the group $\{1,4,5\}$ is needed for this final step (which is $1$ in this case).
\end{example}

The remaining question is then to find these frequencies with high success probability, and the idea is again to repeat measurements, as in stage one, but with a different purpose (Recall that the repetition parameter $R$ in stage one aims for every measurement vector to have a chance to measure all candidate vectors). Since we are counting singletons and pairs, we will define the following matrices and use them building blocks of our measurements:
\begin{definition}\label{def:nbspmatrix}
    The $(n,\beta)$-singletons matrix $M^S_{n,\beta}\in\{0,1\}^{|\beta|\times n}$ is defined such that $(M^S_{n,\beta})_\beta=I_{|\beta|}$, and the $(n,\beta)$-pairs matrix $M^P_{n,\beta}\in\{0,1\}^{\frac{|\beta|^2-|\beta|}{2}\times n}$ is defined such that for each row of $(M^P_{n,\beta})_\beta$, a pair of entries is 1 and other entries are zero, the 1 entries of any row follows by the dictionary order from the previous row. The $(n,\beta)$-singletons and pairs matrix $M_{n,\beta}\in\{0,1\}^{\frac{|\beta|^2+|\beta|}{2}\times n}$ is then defined as the row-wise concatenation of $M^S_{n,\beta}$ and $M^P_{n,\beta}$. 
\end{definition}
% \begin{remark}\label{rmk:matrix}
%   To avoid the accidental zero issue, we modify the matrices $M_{n,\beta}^P$ and $M_{n,\beta} $ above by replacing each row with two 1s by 2 rows with 1,1 and 1,2 in corresponding locations, and we denote these matrices as $M_{n,\beta}^{P'}\in\mathbb{R}^{(|\beta|^2-|\beta|)\times n}$ and $M'_{n,\beta}\in\mathbb{R}^{|\beta|^2\times n}$, respectively. The reason behind this modification is that if $(x_1,x_2)\neq(0,0)$, then at least one of $x_1+x_2$ and $x_1+2x_2$ is nonzero. In all subsequent analysis, we refer to the matrices $M_{n,\beta}^P$ and $M_{n,\beta}$ with no accidental zero issue. 
% \end{remark}

\begin{example}
   Let $n=8$, $\beta=\{3,4,5\}$, then 
   \[M_{n,\beta}^S=
   {
   \renewcommand{\arraystretch}{0.9} 
   \setlength{\arraycolsep}{3pt}
   \begin{bmatrix}
   0 & 0 & 1 & 0 & 0 & 0 & 0 & 0  \\
   0 & 0 & 0 & 1 & 0 & 0 & 0 & 0  \\
   0 & 0 & 0 & 0 & 1 & 0 & 0 & 0  \\
   \end{bmatrix} 
   }
   , M_{n,\beta}^P=
   {
   \renewcommand{\arraystretch}{0.9} 
   \setlength{\arraycolsep}{3pt}\begin{bmatrix}
   0 & 0 & 1 & 1 & 0 & 0 & 0 & 0 \\
   0 & 0 & 1 & 0 & 1 & 0 & 0 & 0 \\
   0 & 0 & 0 & 1 & 1 & 0 & 0 & 0 
   \end{bmatrix}
   }, M_{n,\beta}=
   {
   \renewcommand{\arraystretch}{0.9}
   \setlength{\arraycolsep}{3pt}
   \begin{bmatrix}
   0 & 0 & 1 & 0 & 0 & 0 & 0 & 0 \\
   0 & 0 & 0 & 1 & 0 & 0 & 0 & 0 \\
   0 & 0 & 0 & 0 & 1 & 0 & 0 & 0 \\
   0 & 0 & 1 & 1 & 0 & 0 & 0 & 0 \\
   0 & 0 & 1 & 0 & 1 & 0 & 0 & 0 \\
   0 & 0 & 0 & 1 & 1 & 0 & 0 & 0 
   \end{bmatrix}
   }.
   % M_{n,\beta}^{P'}=
   % {
   % \renewcommand{\arraystretch}{0.9}
   % \setlength{\arraycolsep}{3pt}
   % \begin{bmatrix}
   %  0 & 0 & 1 & 1 & 0 & 0 & 0 & 0 \\
   % 0 & 0 & 1 & 2 & 0 & 0 & 0 & 0 \\
   % 0 & 0 & 1 & 0 & 1 & 0 & 0 & 0 \\
   % 0 & 0 & 1 & 0 & 2 & 0 & 0 & 0 \\
   % 0 & 0 & 0 & 1 & 1 & 0 & 0 & 0 \\
   % 0 & 0 & 0 & 1 & 2 & 0 & 0 & 0 \\
   % \end{bmatrix}
   % },M'_{n,\beta}=
   % {
   % \renewcommand{\arraystretch}{0.9}
   % \setlength{\arraycolsep}{3pt}
   % \begin{bmatrix}
   % 0 & 0 & 1 & 0 & 0 & 0 & 0 & 0 \\
   % 0 & 0 & 0 & 1 & 0 & 0 & 0 & 0 \\
   % 0 & 0 & 0 & 0 & 1 & 0 & 0 & 0 \\
   % 0 & 0 & 1 & 1 & 0 & 0 & 0 & 0 \\
   % 0 & 0 & 1 & 2 & 0 & 0 & 0 & 0 \\
   % 0 & 0 & 1 & 0 & 1 & 0 & 0 & 0 \\
   % 0 & 0 & 1 & 0 & 2 & 0 & 0 & 0 \\
   % 0 & 0 & 0 & 1 & 1 & 0 & 0 & 0 \\
   % 0 & 0 & 0 & 1 & 2 & 0 & 0 & 0 \\
   % \end{bmatrix}}.
   \]
\end{example}

The second-stage measurement matrix is obtained by repeating both $M_{n,\beta}$ and its negation $R'$ times, namely,
\[
A^{(2)} := [\mathrm{Rep}(M_{n,\beta}, R');\mathrm{Rep}(-M_{n,\beta}, R')].
\]
% Since the sampling process selects each $\beta_i$ uniformly with probability $1/l$, the expected number of positive measurements for a coordinate $j$ with frequency $f_j$ in $B$ is 
% $
% \frac{f_j}{l} \cdot R' = \frac{R' f_j}{l}.
% $ 
% In particular, for a singleton coordinate $j_s$ (with $f_{j_s} = 1$), we expect only $R'/l$ positive measurements in total.
In the first part of the second stage, our goal is to distinguish singleton coordinates from all others. To do this, we will find the exact frequencies of all coordinates in $\beta$ and select those with frequency equal to one. 
% A natural criterion is to set the upper threshold for the number of positive measurements among the $R'$ repetitions of a singleton coordinate to be 
% $
% \frac{3R'}{2l}.
% $ 
Next, we find the frequencies of all pairs $(i_1, i_2)$ where both $i_1$ and $i_2$ are singletons. 
% By definition, $\text{Freq}((i_1,i_2),B)$ can only be 0 or 1. If $\text{Freq}((i_1,i_2),B) = 0$, then $i_1$ and $i_2$ belong to different candidate vectors, and we expect a fraction of $2/l$ of the corresponding queries to be positive. Conversely, if $\text{Freq}((i_1,i_2),B) = 1$, then $i_1$ and $i_2$ belong to the same candidate vector, and the expected fraction of positive queries is $1/l$.
Finally, we find the frequency of pairs $(i_1,i_2)$ where $i_1$ is a singleton and $i_2$ is not. By Assumption~\ref{Assumption1}, the frequency $\text{Freq}((i_1,i_2),B)$ is again either 0 or 1.
% so the expected number of positive measurements is analyzed in the same manner as above.

The following lemma formalizes the choice of $R'$ of finding all the exact frequencies: 

\begin{lemma}\label{lem:rep2}
    Let 
    % \sout{$R':=\lceil l^{2+\epsilon}\cdot r'\rceil$} 
    $R':=\lceil l^2\cdot r'\rceil$ where 
    %\sout{$\epsilon=0.001$ and $r'=14\log(2k^2l^2\lambda)$}
    $r'=2\log(4k^2l^2\lambda)$ s.t. $\lambda>0$. Define $X^{(i)}:=X_1^{(i)}+X_2^{(i)}+...+X_{R'}^{(i)}$, where each $X_j^{(i)}$ is Bernoulli $(\frac{f^{(i)}}{l})$ for all $i\in[2|\beta|^2]$ and $f^{(i)}\in[l]$. Then the desired event that for all $i$, $\frac{(f^{(i)}-0.5)R'}{l}<X^{(i)}<\frac{(f^{(i)}+0.5)R'}{l}$ fails with probability $<\frac{1}{\lambda}$. 
\end{lemma}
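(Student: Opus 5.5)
We apply a Chernoff--Hoeffding bound to each $X^{(i)}$ separately and then take a union bound over all $i\in[2|\beta|^2]$. Fix $i$ and write $p:=f^{(i)}/l\in(0,1]$, so that $\mathbb{E}X^{(i)}=pR'$. The event fails for this $i$ exactly when $|X^{(i)}-pR'|\ge R'/(2l)$, i.e., when the empirical mean $X^{(i)}/R'$ deviates from $p$ by at least $t:=1/(2l)$. The cleanest tool is Hoeffding's inequality for sums of independent $[0,1]$-valued variables:
\[
\Pr\bigl(|X^{(i)}-pR'|\ge tR'\bigr)\le 2\exp(-2R't^2)=2\exp\!\left(-\frac{R'}{2l^2}\right).
\]
This bound does not depend on $f^{(i)}$, so it holds uniformly over all frequencies $f^{(i)}\in[l]$. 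That uniformity is convenient, because singletons, pairs, and high-frequency coordinates are then all handled at once.

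Next we substitute $R'\ge l^2r'=2l^2\log(4k^2l^2\lambda)$. This gives $R'/(2l^2)\ge\log(4k^2l^2\lambda)$, so the per-index failure probability is at most $2/(4k^2l^2\lambda)=1/(2k^2l^2\lambda)$. Note that $\lambda$ needs to be large enough that the logarithm is positive; otherwise the bound exceeds $1$ and the claim is trivial for the relevant range.

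For the union bound we need $2|\beta|^2\le 2k^2l^2$. This holds because the union support satisfies $|\beta|\le\sum_i|\supp(\beta_i)|\le kl$, so the number of indices is at most $2k^2l^2$. Multiplying the per-index failure probability by this count gives a total failure probability of at most
\[
2k^2l^2\cdot\frac{1}{2k^2l^2\lambda}=\frac1\lambda.
\]
To obtain the strict inequality claimed in the lemma, we use the ceiling and the fact that Hoeffding's bound is not tight, or simply note that the union bound is strict whenever the events are not disjoint. If strictness needs care, we can instead use the strict form of Hoeffding's inequality together with $R'\ge l^2r'$.

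The main thing to check is the constant bookkeeping: that the deviation $t=1/(2l)$ combined with $R'=l^2r'$ produces exactly the factor $\exp(-r'/2)$, and that this matches $r'=2\log(4k^2l^2\lambda)$ and the count $2|\beta|^2\le 2k^2l^2$. Working through this, $\exp(-r'/2)=1/(4k^2l^2\lambda)$, and the two-sided factor $2$ together with the $2k^2l^2$ terms in the union bound gives exactly $1/\lambda$. A multiplicative Chernoff bound would give a sharper dependence for small $p$, but it is unnecessary here since Hoeffding already meets the target.
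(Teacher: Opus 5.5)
Your proposal is correct and follows the paper's proof essentially verbatim. Both use a two-sided Hoeffding bound with deviation $R'/(2l)$, giving $2\exp(-R'/(2l^2))\le 2\exp(-r'/2)$, and then a union bound over the at most $2|\beta|^2\le 2k^2l^2$ indices to reach $1/\lambda$. Your side remarks on $|\beta|\le kl$ and on the strict inequality only make explicit what the paper glosses over: the paper's step $2\exp(-0.5R'/l^2)<2\exp(-0.5r')$ is likewise justified only as $\le$ when $l^2r'$ is an integer.
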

We present our construction of the measurement vectors and the decoding algorithm regarding our two-stage scheme below (Algorithm~\ref{alg:2SUMLC},~\ref{alg:mlcdadec}), and we conclude this section with the following theorem: 
\begin{theorem}\label{thm1}
    Let $B:=\{\beta_1,\beta_2,...,\beta_l\}$ be a set of unknown $k$-sparse vectors in $\mathbb{R}^n$ that satisfy Assumption~\ref{Assumption1}. Let 
    %\sout{$\epsilon=0.001$ and} 
    $\lambda>0$, there exists a two-stage adaptive algorithm (Algorithm~\ref{alg:2SUMLC},~\ref{alg:mlcdadec}) to recover the support of every candidate vector $\beta_i\in B$ with probability at least $1 - \frac{2}{\lambda}$, using %\sout{$m=O(k^2l^{4+\epsilon}\log^2 (n)\log (kl\lambda\log n))$}
    $m=O(k^2l^4\log^2 (n)\log (kl\lambda\log n))$ queries of the oracle $\mc{O}$, with a decoding  complexity $O(km)$.
\end{theorem}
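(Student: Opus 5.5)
The proof splits into three parts: correctness of stage one, correctness of stage two, and the cost accounting, with a union bound over the two failure events. Each stage is allotted failure probability $1/\lambda$, which gives the overall success probability $1-2/\lambda$.

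\textbf{Stage one.} With $R=\lceil l\log(2m'l\lambda)\rceil$, Lemma~\ref{lem:rep1} gives the global AC event with probability at least $1-1/\lambda$. On this event, Lemma~\ref{lem:trueb} shows that \cite[Algorithm~3]{li2026support} applied to $y^0$ and $A$ returns the true union support $\beta$, with $|\beta|\le kl$. This stage uses $2Rm'=O(k^2l^2\log^2 n\cdot l\log(k^2l^3\lambda\log^2 n))=O(k^2l^3\log^2(n)\log(kl\lambda\log n))$ queries. Building $y^0$ costs $O(Rm')$, and decoding costs $O(klm')$.

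\textbf{Stage two correctness.} Fix a row $v$ of $M_{n,\beta}$, i.e.\ a singleton $\{j\}$ or a pair $\{j_1,j_2\}\subseteq\beta$. The step I expect to need the most care is interpreting the observed counts as frequencies. Each repeated query of $v$ or $-v$ returns $-1$ exactly when the sampled $\beta_i$ has $\langle v,\beta_i\rangle\neq0$ with the appropriate sign. Aggregating over $v$ and $-v$, I take the statistic to be the number of repetitions $t\in[R']$ whose paired outputs of $v$ and $-v$ are not both $1$. That pairing makes the trials i.i.d.\ Bernoulli$(f/l)$ exactly as in Lemma~\ref{lem:rep2}, so I may need to read the measurement design as pairing the $t$-th repetitions of $v$ and $-v$.
\begin{itemize}
\item For a singleton row, $f=\mathrm{Freq}(j,B)$, provided $\langle v,\beta_i\rangle=\beta_{i,j}\neq0$ exactly when $j\in\supp(\beta_i)$.
\item For a pair row, $f=|\{i:\supp(\beta_i)\cap\{j_1,j_2\}\neq\varnothing\}|=\mathrm{Freq}(j_1)+\mathrm{Freq}(j_2)-\mathrm{Freq}((j_1,j_2))$, up to accidental zeros $\beta_{i,j_1}+\beta_{i,j_2}=0$.
\end{itemize}
Lemma~\ref{lem:rep2} applies to all at most $2|\beta|^2\le 2k^2l^2$ statistics. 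With probability at least $1-1/\lambda$, each observed count lies within $R'/(2l)$ of $fR'/l$, so rounding recovers every $f$ exactly.

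\textbf{Accidental zeros in pair rows.} I will handle these by using pair information only where Assumption~\ref{Assumption1} forces $\mathrm{Freq}((j_1,j_2))\in\{0,1\}$, that is, where $j_1$ is a singleton. Then an accidental zero can only occur when both coordinates lie in the same unique vector. In that case the observed $f$ equals $\mathrm{Freq}(j_2)-1$ instead of $\mathrm{Freq}(j_2)$ (taking $\mathrm{Freq}(j_1)=1$). Since the true value is either $\mathrm{Freq}(j_1)+\mathrm{Freq}(j_2)-1$ (same vector) or $\mathrm{Freq}(j_1)+\mathrm{Freq}(j_2)$ (different vectors), any observed value $f<\mathrm{Freq}(j_1)+\mathrm{Freq}(j_2)$ is decoded as ``same vector.'' If this turns out to be delicate, the fallback is the paper's alternative of using rows with weights $(1,1)$ and $(1,2)$, which avoids accidental zeros at constant cost.

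\textbf{Decoding the supports.} Given these frequencies, the decoder proceeds in four steps.
\begin{itemize}
\item It sets $S=\{j\in\beta:\mathrm{Freq}(j)=1\}$.
\item It partitions $S$ by the same-vector relation from singleton pairs. This relation is an equivalence relation, and Assumption~\ref{Assumption1} says there are exactly $l$ classes.
\item It picks one representative $s_i$ per class.
\item For each $t\in\beta\setminus S$, it assigns $t$ to $\beta_i$ iff $(s_i,t)$ is a same-vector pair.
\end{itemize}
This recovers every support up to permutation.

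\textbf{Cost.} Stage two uses $2R'|M_{n,\beta}|=O(l^2\log(kl\lambda)\cdot k^2l^2)=O(k^2l^4\log(kl\lambda))$ queries. Adding stage one gives the stated total $m=O(k^2l^4\log^2(n)\log(kl\lambda\log n))$ queries. Decoding time consists of three parts:
\begin{itemize}
\item reading all counts, $O(m)$;
\item the stage-one decoder, $O(klm')=O(km)$ since $l\,m'\le m/(kl\cdots)$ is dominated;
\item the partition and assignment steps, $O(k^2l^2)$ operations on already-computed counts.
\end{itemize}
The total is therefore $O(km)$.
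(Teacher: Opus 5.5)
\textbf{A genuine gap in stage two.} Your skeleton matches the paper's proof:
\begin{itemize}
\item Lemma~\ref{lem:rep1} and Lemma~\ref{lem:trueb} for stage one;
\item Lemma~\ref{lem:rep2} with a union bound over at most $2|\beta|^2$ statistics for stage two;
\item the same accidental-zero case analysis, where your rule ``observed $f<\mathrm{Freq}(j_1)+\mathrm{Freq}(j_2)$ means same vector'' is exactly the paper's thresholds $1.5R'/l$ and $(F+0.5)R'/l$;
\item the same cost accounting.
\end{itemize}
The gap is the statistic you use to read off frequencies.

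Under the oracle $\mathcal{O}$, every query draws its own independent index. So the $t$-th repetition of $v$ and the $t$-th repetition of $-v$ are answered by independent candidates, and the coupling you say you ``may need'' is not available in the model. Set $p_-:=|\{i:\langle v,\beta_i\rangle<0\}|/l$ and $p_+:=|\{i:\langle v,\beta_i\rangle>0\}|/l$, so that $p_-+p_+=f/l$. The event that the paired outputs are not both $1$ then has probability $1-(1-p_-)(1-p_+)=f/l-p_-p_+$, not $f/l$.

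The bias $p_-p_+$ vanishes only when all nonzero $\langle v,\beta_i\rangle$ share a sign. In general it can be as large as $f^2/(4l^2)$, which exceeds your rounding margin $1/(2l)$ once $f^2>2l$. It also differs between the row of $t$ and the row of $(s,t)$, so the comparison with $\mathrm{Freq}(s)+\mathrm{Freq}(t)$ is no longer valid.

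A minimal counterexample: take $l=2$, $\beta_1=e_1$, $\beta_2=-e_2$, which satisfies Assumption~\ref{Assumption1} with $k=1$. For the pair row $v=e_1+e_2$ you get $p_-=p_+=1/2$. Your count therefore concentrates at $3R'/4=1.5R'/l$, exactly halfway between the ``same vector'' value ($\le R'/2$) and the ``different vectors'' value ($R'$). The two singletons are wrongly merged with probability about $1/2$.

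\textbf{The fix.} Use the statistic the paper uses: count all $-1$ entries among the $2R'$ outputs of $v$ and $-v$ together. The $v$-block counts the negative inner products and the $-v$-block the positive ones, so the mean is $R'p_-+R'p_+=R'f/l$ exactly, whatever the signs and despite independent sampling. This count is a sum of $2R'$ independent indicators, not $R'$ i.i.d.\ Bernoulli$(f/l)$ as Lemma~\ref{lem:rep2} is literally stated. Hoeffding still gives deviation probability at most $2\exp(-R'/(4l^2))$ per statistic, so $R'=O(l^2\log(kl\lambda))$ suffices with a larger constant. With this substitution, the rest of your argument goes through and coincides with the paper's proof.

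Separately, your justification of $O(klm')=O(km)$ is garbled. The clean reason is that $R\ge l$ in the nontrivial range of $\lambda$, so $m\ge 2Rm'\ge 2lm'$.
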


\begin{algorithm}
% \small
\caption{Two-stage scheme of the MLC under Assumption~\ref{Assumption1}: Overall description}\label{alg:2SUMLC}
\begin{algorithmic}[1]
    \Require Dimension of candidate vectors $n$, number of candidate vectors $l$, the EDOCS-AE matrix $A\in \{0,1\}^{m'\times n}$ from \cite[Algorithm~2]{li2026support} that recovers a $kl$-sparse vector $x\in \mathbb{R}^n$, parameters $\lambda>0$, 
    % \sout{$\epsilon=0.001$},
    $R:=\lceil l\log(2m'l\lambda)\rceil$, %\sout{$R':=\lceil 14\cdot l^{2+\epsilon}\log(4k^2l^2\lambda)\rceil$} 
    $R':=\lceil 2l^2\log(4k^2l^2\lambda)\rceil$.
    \State Let $A^{(1)}:=[\mathrm{Rep}(A,R);\mathrm{Rep}(-A,R)]$ be the measurement matrix of the first stage, and denote $y\in\{-1,1\}^{2Rm'}$ as the result vector of the measurement matrix $A^{(1)}$. Partition $y=[y_1^1;y_2^1;\cdots;y_{m'}^1;
   y_1^2;y_2^2;\cdots;y_{m'}^2]$ where each $y_i^{\{1,2\}}\in \{-1,1\}^R$. Let $y^0:=[y_1^0;y_2^0;...;y_{m'}^0]\in\{0,1\}^{m'}$ s.t. $y_i^0=0$ if $y_i^1=y_i^2=1^R$, otherwise $y_i^0=1$. 
    \State Apply the decoding algorithm from \cite[Algorithm~3]{li2026support} with input matrix $A$ and input result vector $y^0$. Denote $\beta$ as the decoding result.
    \State Generate the matrix $M_{n,\beta}$ according to Definition~\ref{def:nbspmatrix}. 
    \State Let $A^{(2)}:=[\mathrm{Rep}(M_{n,\beta},R');\mathrm{Rep}(-M_{n,\beta},R')]$ be the measurement matrix of the second stage. 
    \State Denote $y^{(2)}$ as the result vector of the measurement matrix $A^{(2)}$. 
    \State Use $\beta$ and $y^{(2)}$ as input, apply Algorithm~\ref{alg:mlcdadec} below to get the final result $B$. 

\end{algorithmic}
\end{algorithm}

\begin{algorithm}
\caption{Second-stage decoding algorithm for the two-stage MLC scheme
(see Algorithm~\ref{alg:2SUMLC} for the first stage)}
\label{alg:mlcdadec}
\begin{algorithmic}[1]
\Require $n,l,\beta,y^{(2)},\lambda,R'$ from Algorithm~\ref{alg:2SUMLC}.
\State Construct the $(n,\beta)$-singletons and pairs matrix $M_{n,\beta}$ from Definition~\ref{def:nbspmatrix}.
\State Write
\(
y^{(2)}=[y^1;y^2],\) such that \(
y^j=[y_1^j;\ldots;y_{|\beta|}^j;
y_{(1,2)}^j;\ldots;y_{(|\beta|-1,|\beta|)}^j],
\)
where $j\in\{1,2\}$ and each block has length $R'$.
\State Order $\beta=\{b_1,\ldots,b_{|\beta|}\}$ with $b_1<\cdots<b_{|\beta|}$, and initialize $S=\varnothing$.
\For{$i=1,\ldots,|\beta|$}
\Comment{Identify singleton coordinates}
    \State If $y_i^1,y_i^2$ contain fewer than $\frac{1.5R'}{l}$ entries equal to $-1$ in total, add $b_i$ to $S$.
\EndFor
\State Let $S_{\rm copy}=S$, and initialize $B=\{\beta_s=e_s:s\in S\}$.
\For{each $\{\beta_{s_1},\beta_{s_2}\}\subseteq B$, $s_1<s_2$}
\Comment{Merge singleton pairs}
    \State Let $i,j$ satisfy $b_i=s_1,b_j=s_2$. If $y_{(i,j)}^1,y_{(i,j)}^2$ contain fewer than $\frac{1.5R'}{l}$ entries equal to $-1$, then set $\beta_{s_1}\leftarrow\beta_{s_1}+\beta_{s_2}$, $B\leftarrow B-\{\beta_{s_2}\}$, $S\leftarrow S-\{s_2\}$.
\EndFor
\For{$s\in S$}
\Comment{Recover non-singleton coordinates}
    \State Let $i$ satisfy $b_i=s$.
    \For{$t\in\beta-S_{\rm copy}$}
        \State Let $j$ satisfy $b_j=t$, and let $F$ satisfy
        \(
        \frac{(F-0.5)R'}{l}\leq N<\frac{(F+0.5)R'}{l},
        \)
        where $N$ is the number of $-1$ entries in $y_j^1$ and $y_j^2$ combined. If $y_{(i,j)}^1$ and $y_{(i,j)}^2$ contain fewer than
        $\frac{(F+0.5)R'}{l}$ entries equal to $-1$ in total, set
        $\beta_s\leftarrow\beta_s+e_t$.
    \EndFor
\EndFor
\State \Return $B$
\end{algorithmic}
\end{algorithm}

%% file: 5_multistage.tex
\section{MLC: Multi-stage probabilistic scheme}\label{sec:3SPMLC} 
The scheme in the previous section requires $k^2$ measurements (in terms of $k$, and up to logarithmic factors, which are omitted throughout the discussion). We next ask whether this dependence on $k$ can be improved. The answer is affirmative if we relax universal recovery and allow a small amount of adaptivity.

In this section, we present a probabilistic scheme for Problem~\ref{Problem1} using only $k$ measurements in terms of $k$, improving upon both the $k^3$-measurement scheme of~\cite{gandikota2020recovery} and the $k^2$-measurement scheme (Algorithm~\ref{alg:2SUMLC}). Unlike universal recovery, which requires deterministic success for every input, probabilistic schemes allow a small input-dependent failure probability. While universal guarantees are desirable in standard one-bit compressed sensing, our MLC setting already involves randomness in the measurement process due to the random selection of candidate vectors. Therefore, allowing a small failure probability is a natural relaxation in this setting.

The main idea is to replace the EDOCS-AE matrix with the EDOCS-EE matrix~\cite{li2026support}, a probabilistic one-bit compressed sensing measurement matrix associated with a fast decoding algorithm. Specifically,~\cite[Theorem~3]{li2026support} establishes the existence of a matrix $A\in\mathbb{R}^{m'\times n}$ with

\[
m'=O(kl\log (kl)\log n)
\]
measurements, together with a recovery algorithm with runtime $D=O(m')$ that recovers the support of any $kl$-sparse real-valued signal with high success probability. 

As a result, the number of measurements in the first stage will be reduced from $k^2$ to $k$ in terms of $k$. However, this modification alone does not reduce the overall query complexity, since the second stage of the two-stage MLC scheme still requires examining all pairs of singleton coordinates, resulting in a $k$-quadratic number of frequency checks.

% \xl{
% To overcome this bottleneck, we introduce adaptivity into the singleton clustering procedure. Let
% \[
% S=\{s_1,\ldots,s_w\}\subseteq\beta
% \]
% denote the singleton set, where $w\le kl$. There is a natural clustering process described below that will overcome this $k^2$ bottleneck: In the first adaptive stage, we compute $\mathrm{Freq}((s_1,s_j),B)$ for all $j$, thereby identifying the subset $S_1$ of coordinates that belong to the same candidate vector as $s_1$. In the next stage, we repeat this procedure on $S\setminus S_1$, and continue recursively. After $l$ adaptive stages, the entire singleton distribution is recovered. With repetition parameter $R''$ to guarantee accuracy, this procedure requires
% \[
% O(kl^2R'')
% \]
% measurements.
% }

To overcome this bottleneck, we introduce adaptivity into the singleton clustering procedure. Let
\[
S=\{s_1,\ldots,s_w\}\subseteq\beta
\]
denote the singleton set, where $w\le kl$. As a motivation, consider the following natural adaptive clustering procedure. In the first adaptive stage, we compute $\mathrm{Freq}((s_1,s_j),B)$ for all $j$, thereby identifying the subset $S_1$ of coordinates that belong to the same candidate vector as $s_1$. We then repeat this procedure on $S\setminus S_1$, and continue recursively. After $l$ adaptive stages, the entire singleton distribution can be recovered. With repetition parameter $R''$ to guarantee accuracy, this procedure requires
\[
O(kl^2R'')
\]
measurements.

Although this simple procedure eliminates the quadratic number of pairwise checks, its requirement of $l$ adaptive stages can be undesirable when $l$ is large. To further reduce the adaptivity cost, we leverage the pairwise-query algorithm of \cite[Theorem~1.1]{black2025learning}, which recovers the underlying partition using only $\lceil\log\log(kl)\rceil$ adaptive stages. We denote the query matrix used in their $j$-th stage by
\[
M^{Q^j}\in\{0,1\}^{m^{Q^j}\times n},
\qquad j\in[\lceil\log\log(kl)\rceil].
\]

\begin{example}\label{exp5.1}
    Let $n=10$ and $S=\{3,5,7,8,10\}$, and suppose in the first stage of the querying algorithm in \cite[Theorem~1.1]{black2025learning}, the queried pairs are $\{3,7\},\{5,10\},\{3,8\},\{5,7\}$, then 
    \[M^{Q^1}={
   \renewcommand{\arraystretch}{0.9}
   \begin{bmatrix}
        0 & 0 & 1 & 0 & 0 & 0 & 1 & 0 & 0 & 0 \\
        0 & 0 & 0 & 0 & 1 & 0 & 0 & 0 & 0 & 1 \\
        0 & 0 & 1 & 0 & 0 & 0 & 0 & 1 & 0 & 0 \\
        0 & 0 & 0 & 0 & 1 & 0 & 1 & 0 & 0 & 0 
    \end{bmatrix}}.
    \]
\end{example}

The frequency-checking procedure in the present probabilistic scheme is identical to that in the two-stage scheme, except for the procedure used to determine the distribution of singleton coordinates among the candidate vectors. Specifically, the previous quadratic pairwise checking procedure is replaced by the more efficient adaptive clustering procedure described above. We are therefore ready to present the measurement construction and decoding algorithm (Algorithm~\ref{alg:MSPMLC}), followed by the main theorem of this section.

\begin{theorem}\label{thm2}
    Let $B:=\{\beta_1,\beta_2,...,\beta_l\}$ be a set of unknown $k$-sparse vectors in $\mathbb{R}^n$ that satisfy Assumption~\ref{Assumption1} and $k=O(n^\alpha)$ where $0<\alpha<1$. Let 
    % \sout{$\epsilon=0.001$ and }
    $\lambda>0$, there exists a $(\lceil\log\log (kl)\rceil+3)$-stage adaptive algorithm (Algorithm~\ref{alg:MSPMLC}) to recover the support of every candidate vector $\beta_i\in B$ with probability at least $1 - (\frac{2}{\lambda}+\epsilon_1+\epsilon_2)$, where $\epsilon_1=20\log_2(n)\cdot k^{-9}$ and $\epsilon_2=e^{-k}+n^{-k}+k^{-3}$, using 
    %\sout{$m=O(kl^{4+\epsilon}\log(kl\lambda)\log (n)\log (kl\lambda\log n))$} 
    $m=O(kl^4\log(kl\lambda)\log (n)\log (kl\lambda\log n))$ queries of the oracle $\mc{O}$, with a decoding complexity $O(m)$. 
\end{theorem}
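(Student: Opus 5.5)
The plan is to follow the template of the proof of Theorem~\ref{thm1}, swapping in the probabilistic pieces and tracking four failure sources with a union bound. The stages are as follows.

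\textbf{Stage 1 (union support).} Use the EDOCS-EE matrix $A\in\mathbb{R}^{m'\times n}$ with $m'=O(kl\log(kl)\log n)$. Repeat it as $A^{(1)}=[\mathrm{Rep}(A,R);\mathrm{Rep}(-A,R)]$ with $R=\lceil l\log(2m'l\lambda)\rceil$. Form the tentative vector $y^0$ exactly as in Algorithm~\ref{alg:2SUMLC}.

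\begin{itemize}
\item Lemma~\ref{lem:rep1} gives the global AC event with probability at least $1-1/\lambda$.
\item On that event, the argument of Lemma~\ref{lem:trueb} should transfer to the EDOCS-EE decoder of~\cite[Theorem~3]{li2026support}, since it only needs $y^0$ to be the correct group-testing outcome up to accidental zeros. That decoder then outputs $\beta$ except with probability $\epsilon_1=20\log_2(n)k^{-9}$.
\item The hypothesis $k=O(n^\alpha)$ is needed only to invoke~\cite[Theorem~3]{li2026support} with sparsity $kl$.
\item This stage costs $2Rm'=O(kl^2\log(kl)\log n\log(kl\lambda\log n))$ queries, which is dominated by later stages.
\end{itemize}

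\textbf{Stage 2 (singletons).} Query $\mathrm{Rep}(\pm M^S_{n,\beta},R')$ with $R'=\lceil 2l^2\log(4k^2l^2\lambda)\rceil$. Threshold the counts as in Algorithm~\ref{alg:mlcdadec} to obtain the exact frequencies of all $b\in\beta$, and hence $S$.

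\textbf{Stages 3 to $\lceil\log\log(kl)\rceil+2$ (clustering).} Replace all-pairs checking on $S$ by the adaptive pairwise-query algorithm of~\cite[Theorem~1.1]{black2025learning}.

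\begin{itemize}
\item At each round $j$, form $M^{Q^j}$ from the queried pairs and measure $\mathrm{Rep}(\pm M^{Q^j},R')$.
\item Answer each pair query (same cluster or not) by the thresholding rule: fewer than $1.5R'/l$ entries equal to $-1$ means the pair is in the same cluster.
\item Because of Assumption~\ref{Assumption1}, pairs of singletons have frequency $0$ or $1$, so the observed positive rate is $1/l$ versus $2/l$.
\item The cited algorithm partitions $w\le kl$ elements into $l$ clusters with $O(wl)=O(kl^2)$ queries over $\lceil\log\log(kl)\rceil$ rounds. It fails with probability at most $\epsilon_2=e^{-k}+n^{-k}+k^{-3}$, which I would read off from its statement after substituting its parameters, taking care that its "$n$" is our $w\le kl$.
\end{itemize}

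\textbf{Final stage (non-singletons).} Pick one representative $s$ per cluster and query pairs $(s,t)$ for $t\in\beta\setminus S$, i.e. $l\cdot kl$ pairs repeated $R'$ times. Decode as in the last loop of Algorithm~\ref{alg:mlcdadec}, reusing the frequencies $F$ of $t$ found in Stage~2. Correctness on the good event is exactly as in Theorem~\ref{thm1}.

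\textbf{Obstacle: concentration across adaptive stages.} Lemma~\ref{lem:rep2} controls at most $2|\beta|^2$ Bernoulli sums simultaneously. Here the total number of frequency estimates (singletons, clustering queries over all rounds, and final pairs) is $O(kl^2)\le 2k^2l^2$, so the same $R'$ and the same union bound apply. The subtlety is that the set of queried pairs is chosen adaptively from earlier answers. I would handle this by conditioning: fresh oracle samples in each stage are independent of the past, so each stage's estimates are correct with the per-query bound given the past. Summing over the at most $2|\beta|^2$ queries gives total failure below $1/\lambda$.

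\textbf{Putting it together.} The four failure sources (AC event, EDOCS-EE, frequency estimation, clustering algorithm) union-bound to $2/\lambda+\epsilon_1+\epsilon_2$.

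\begin{itemize}
\item The query count is dominated by the later stages, with $O(kl^2)$ pairs times $R'$ repetitions. This is $O(kl^4\log(kl\lambda))$, which is within the claimed $O(kl^4\log(kl\lambda)\log n\log(kl\lambda\log n))$.
\item The stage count is $1+1+\lceil\log\log(kl)\rceil+1$.
\item Decoding is linear in the number of measurements. The EDOCS-EE decoder runs in $O(m')$, and each frequency block is counted once, giving $O(m)$.
\end{itemize}
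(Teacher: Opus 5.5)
Your stage decomposition, thresholds, query counts, stage count and $O(m)$ decoding argument match the paper's proof, which handles correctness only by saying it is ``similar to Theorem~\ref{thm1}''. Your conditioning argument for adaptively chosen pairs is a welcome addition. The gap is in the failure-probability bookkeeping. You charge only $\epsilon_1$ to the EDOCS-EE decoder and propose to read off $\epsilon_2=e^{-k}+n^{-k}+k^{-3}$ from \cite[Theorem~1.1]{black2025learning}. That step cannot be carried out. That theorem is a deterministic guarantee for learning a partition from same-set queries; the paper uses it only for its round count $r=\lceil\log\log(kl)\rceil$ and the query bound $8(kl)^{1+\frac{1}{2^r-1}}l^{1-\frac{1}{2^r-1}}$, so there is no failure probability to substitute into. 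You already spotted the symptom: $\epsilon_2$ contains $n^{-k}$ with $n$ the ambient dimension, which a subroutine acting on $w\le kl$ elements never sees. Both $\epsilon_1$ and $\epsilon_2$, and the hypothesis $k=O(n^\alpha)$, come from the EDOCS-EE recovery guarantee \cite[Theorem~3]{li2026support}, the only component that operates on all $n$ coordinates. So your claim that the first-stage decoder fails with probability only $\epsilon_1$ misquotes the cited result.

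The repair is straightforward. On the global AC event, charge $\epsilon_1+\epsilon_2$ to the union-support stage. Charge nothing to the clustering rounds: conditioned on every same-cluster query being answered correctly, the deterministic algorithm returns the exact singleton partition, and that correctness lies inside the frequency-estimation event you already bound by $1/\lambda$. The union bound then yields $2/\lambda+\epsilon_1+\epsilon_2$ as stated. One smaller slip: the first stage's $O(kl^2\log(kl)\log n\log(kl\lambda\log n))$ queries are not in general dominated by the later $O(kl^4\log(kl\lambda))$, since only the former carries $\log n$. Both terms still fit inside the claimed bound, which is how the paper sums them.
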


\begin{algorithm}
\caption{Multi-stage scheme of the MLC under Assumption~\ref{Assumption1}: Overall description}\label{alg:MSPMLC}
\begin{algorithmic}[1]
    \Require $n,l,\lambda,R$, $R'$ from Algorithm~\ref{alg:2SUMLC}. EDOCS-EE matrix $A^E\in \mathbb{R}^{m^E\times n}$ from \cite[Algorithm~4]{li2026support} that recovers a $kl$-sparse vector $x\in \mathbb{R}^n$. the matrix $M_{n,\beta}^S$ according to Definition~\ref{def:nbspmatrix}.
    
    % \begin{scriptsize}
    % \Comment{In the first stage we find the union support $\beta$}    
    % \end{scriptsize} 
    \State Construct
    \(
    A^{(E1)}=[\mathrm{Rep}(A^E,R);\mathrm{Rep}(-A^E,R)]
    \). Denote $y^E\in\{-1,1\}^{2Rm^E}$ as the result vector of $A^{(E1)}$. Partition $y^E=[y_1^1;y_2^1;...;y_{m^E}^1;y_1^2;y_2^2;...;y_{m^E}^2]$ where each $y_i^{\{1,2\}}\in \{-1,1\}^R$. Let $y^0:=[y_1^0;y_2^0;...;y_{m^E}^0]\in\{0,1\}^{m^E}$ s.t. $y_i^0=0$ if $y_i^1=y_i^2=1^R$, otherwise $y_i^0=1$.
    \State Apply \cite[Algorithm~5]{li2026support} to $(A^E,y^0)$ and obtain the union support
$\beta=\{b_1,\ldots,b_{|\beta|}\}$. \begin{scriptsize}
        
    \end{scriptsize} 
    \State Let $A^{(E2)}:=[\mathrm{Rep}(M_{n,\beta}^S,R');\mathrm{Rep}(-M_{n,\beta}^S,R')]$. Denote $y^{(E2)}$ as the result vector of $A^{(E2)}$. Partition $y^{(E2)}:=[y_1^1;y_2^1;...;y_{|\beta|}^1;y_1^2;y_2^2;...;y_{|\beta|}^2]$, where $y_i\in\{-1,1\}^{R'}$. Initialize $S=\varnothing$. 
    \For {$i=1,2,...,|\beta|$}
    \begin{scriptsize}
        \Comment{In the second stage, we find all singletons}  
    \end{scriptsize}
    \State If $y_i^1$ and $y_i^2$ contain fewer than $\frac{1.5R'}{l}$ entries equal to $-1$ in total, then $S\leftarrow S\cup\{b_i\}$. 
    \EndFor
    
    \State Randomly initialize a feasible clustering of all $s\in S$ in the family of sets $\mc{B}:=\{B_1,B_2,...,B_l\}$.  
    \For{$j=1,2,...,\lceil\log\log(kl)\rceil$}\begin{scriptsize}\Comment{In the next stages, we find the distribution of singletons in candidate vectors}\end{scriptsize}
    \State Using $y^{Q^{j-1}},...,y^{Q^1}$ (defined below), generate $M^{Q^j}\in\{0,1\}^{m^{Q^j}\times n}$ as the matrix representation of the $j$-th stage pairwise queries from \cite[Theorem~1.1]{black2025learning}, applied to pairs of $S$. 
    \State Denote $y^{Q^j}_0\in\{-1,1\}^{2m^{Q^j}\cdot R'}$ as the result vector of \([\mathrm{Rep}(M^{Q^j},R');\mathrm{Rep}(-M^{Q^j},R')].\)
    \State Partition
    \(
    y^{Q^j}_0=[z_1^1;\ldots;z_{m^{Q^j}}^1;z_1^2;\ldots;z_{m^{Q^j}}^2],
    \)
    s.t. $z_i^{\{1,2\}}\in\{-1,1\}^{R'}$. Initialize $y^{Q^j}= []$.
    \For{$i=1,\ldots,m^{Q^j}$}
        \State If the $-1$ entries in $z_i^1$ $\&$ $z_i^2$ is $<\frac{1.5R'}{l}$, then append $1$ to $y^{Q^j}$. Else append $0$ to $y^{Q^j}$.
    \EndFor
\State Use $y^{Q^1},\ldots,y^{Q^j}$ and follow \cite[Theorem~1.1]{black2025learning} to update the clustering of all $s\in S$ in $\mathcal{B}$.
    \EndFor
    \State Let $B:=\{\beta_1,\beta_2,..,\beta_l\}$, where $\beta_i:=\mathrm{ind2vec}(B_i)\in\{0,1\}^n$, $i\in[l]$.
    \State Arbitrarily pick one element $\eta_i$ from every $B_i,i\in[l]$, and denote $\mc{N}:=\{\eta_1,...,\eta_l\}$. Order the elements in the set $\beta-S=\{t_1,t_2,...,t_{|\beta-S|}\}$. Generate the matrix $M^{(L)}$ as the matrix representation of measurements of pairs $(\eta_i,t_j)$ for all $i\in[l]$ and $t_j\in\beta-S$. % and let $M^{(L')}$ be its 3-fold magnification. 
    \State Denote $y^{(L)}$ as the result vector of $[\mathrm{Rep}(M^{(L)},R');\mathrm{Rep}(-M^{(L)},R')]$, and partition $y^{(L)}:=[y_{(1,1)}^1;y_{(1,2)}^1;...;y_{(l,|\beta-S|)}^1;y_{(1,1)}^2;y_{(1,2)}^2;...;y_{(l,|\beta-S|)}^2]$, where each $y_{(i,j)}^{\{1,2\}}\in \{0,1\}^{R'}$. 
    
    \For{$i\in[l]$, $j\in [|\beta-S|]$}
    \begin{scriptsize}
        \Comment{In the last stage, we check the frequency of all other pairs}
    \end{scriptsize}
    \State Let $F$ be s.t. the number of $-1$ entries of $y_j^1$ $\&$ $y_j^2$ in total is between $\frac{(F-0.5)R'}{l}$ and $\frac{(F+0.5)R'}{l}$. 
    \State If $y_{(i,j)}^1$ $\&$ $y_{(i,j)}^2$ contain fewer than $\frac{(F+0.5)R'}{l}$ entries equal to $-1$ in total, then $\beta_i\leftarrow\beta_i+e_{t_j}$.
    \EndFor
    \State \Return $B$ 

\end{algorithmic}
\end{algorithm}

%% file: 6_nonadaptive.tex
\section{MLC: Non-adaptive scheme}\label{sec:1SMLC}
% We now present our non-adaptive scheme. Recall that the two-stage scheme first recovers the union support $\beta$ and then uses frequency statistics of single coordinates (singletons) and coordinate pairs (doubletons) within $\beta$ to recover the candidate vectors. Without adaptivity, we require a measurement matrix $A\in\mathbb{R}^{m_A\times n}$ that enables these frequency tests for any possible union support $\beta$. In particular, for every set of at most $kl$ columns
% \(
% \mc A=\{A_{j_1},\ldots,A_{j_{kl}}\},
% \)
% to distinguish singletons, the standard disjunct property is required: for every column $A'\in\mc A$, there exists a coordinate at which $A'$ has a nonzero entry but all other columns in $\mc A$ has zero entries. To distinguish doubletons, we require that for every pairs of columns $A_1,A_2\in\mc A$, there exists a coordinate at which $A_1$ and $A_2$ have nonzero entries but all other columns in $\mc A$ has zero entries.  

% Such matrix $A$ can be constructed by concatenating $(d,r;z]$-disjunct matrices from~\cite{chen2008upper} with different parameter choices. The definition of this matrix is recalled below, also with the main theorem regarding the dimensions of this matrix:

We now present our non-adaptive scheme. Recall that the two-stage scheme first recovers the union support $\beta$ and then uses frequency statistics of single coordinates (singletons) and coordinate pairs (doubletons) within $\beta$ to recover the candidate vectors. Without adaptivity, we require a measurement matrix $A\in\mathbb{R}^{m_A\times n}$ that supports these frequency tests for every possible union support $\beta$. In particular, for every set of at most $kl$ columns

$$
\mc A=\{A_{j_1},\ldots,A_{j_{kl}}\},
$$
of $A$, we require the following properties: To distinguish singletons, for every column $A'\in\mc A$, there must exist a coordinate at which $A'$ has a nonzero entry while all other columns in $\mc A$ have zero entries. To distinguish doubletons, for every pair of columns $A_1,A_2\in\mc A$, there must exist a coordinate at which both $A_1$ and $A_2$ have nonzero entries while all other columns in $\mc A$ have zero entries.

Such a matrix $A$ can be constructed by concatenating $(d,r;z]$-disjunct matrices from~\cite{chen2008upper} with different parameter choices. We recall the definition of these matrices below, along with the main theorem giving their dimensions.

\begin{definition}
    \cite{chen2008upper} A matrix $M$ is called $(d,r;z]$-disjunct if for any $d + r$ columns $C_1, C_2,..., C_{d+r}$ of $M$, we have
    \[\left|\bigcap_{i=1}^r C_i\backslash \bigcup_{i=r+1}^{d+r}C_i\right|\geq z. \]
    That is, for any $d + r$ columns there exist at least $z$ rows where each of the $r$
    designated columns has nonzero entries and each of the other $d$ columns has zero entries.
\end{definition}
\begin{theorem}\label{thm:drz}
    \cite[Theorem~3.2]{chen2008upper} Let $t(n,d,r;z]$ denote the minimum number of rows among all $(d, r; z]$-disjunct
matrices with n columns. For any positive integers $d, r, z$ and $n$ with $d + r \leq n$, we have 
\[t (n, d, r; z] < z\left(\frac{k}{r}\right)^r \left(\frac{k}{d}\right)^d \left[1 + k\left(1 + \ln\left(\frac{n}{k+1}\right)\right)\right].\]
\end{theorem}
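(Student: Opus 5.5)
The plan is to prove the bound in its original setting from~\cite{chen2008upper}, where $k$ in the displayed bound denotes $d+r$ (this is the only reading under which the formula is self-consistent, and it matches the factor $(k/r)^r(k/d)^d$). The first step removes $z$: stacking $z$ copies of a $(d,r;1]$-disjunct matrix yields a $(d,r;z]$-disjunct matrix, because every witness row is duplicated $z$ times. Hence $t(n,d,r;z]\le z\,t(n,d,r;1]$, and it suffices to show
\[
t(n,d,r;1]<\left(\frac{k}{r}\right)^r\left(\frac{k}{d}\right)^d\left[1+k\left(1+\ln\frac{n}{k+1}\right)\right].
\]

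For $z=1$ I would phrase the problem as a set cover. The universe $\mathcal U$ consists of all \emph{configurations}, meaning ordered choices of a set $P$ of $r$ columns and a disjoint set $Q$ of $d$ columns. A candidate row $x\in\{0,1\}^n$ covers $(P,Q)$ if $x$ is $1$ on $P$ and $0$ on $Q$. A $(d,r;1]$-disjunct matrix is exactly a family of rows covering all of $\mathcal U$. Now take a random row with i.i.d.\ entries equal to $1$ with probability $p=r/k$. Any fixed configuration is covered with probability
\[
q=p^r(1-p)^d=\left(\frac{r}{k}\right)^r\left(\frac{d}{k}\right)^d .
\]
So under this distribution every element of $\mathcal U$ has covering weight at least $q$, which makes $1/q$ the leading factor.

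To finish, I would apply the fractional-to-integral set cover bound (Stein--Lov\'asz--Johnson), or equivalently the standard greedy argument. Put weight $1/q$ times the probability measure on the rows; this is a fractional cover of value $1/q$. Greedy then gives an integral cover of size at most $\frac1q\,(1+\ln\Delta)$, where $\Delta$ is the maximum number of configurations a single row can cover. An alternative with the same bound is the direct greedy computation: each greedy step removes at least a $q$ fraction of the remaining uncovered configurations, so about $\frac1q\ln|\mathcal U|$ rows reduce the uncovered count to at most $1/q$, and at most that many further rows finish the cover. Both routes give the form $\frac1q\,[1+\ln(\text{count})]$.

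The remaining and most delicate step is to bound the logarithm of the count by $k\bigl(1+\ln\frac{n}{k+1}\bigr)$. Using $|\mathcal U|$ directly gives $\ln\bigl(\binom{n}{k}\binom{k}{r}\bigr)$, which carries an extra $\ln\binom kr$ term. To match the stated form I would instead use $\Delta$: a row with $w$ ones covers $\binom{w}{r}\binom{n-w}{d}$ configurations. I would then bound this using $\binom{a}{b}\le (ea/b)^b$ and a convexity argument over $w$, aiming for $\ln\Delta\le k\bigl(1+\ln\frac{n}{k+1}\bigr)$ or something slightly smaller. If the direct estimate falls short, I would restrict the rows to a single weight class near $w\approx pn$ and rerun the averaging over that class. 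This costs only lower-order changes to $q$, which the strict inequality and the $+1$ can absorb. I expect matching these constants exactly, rather than the overall architecture, to be the main obstacle.
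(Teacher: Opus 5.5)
The paper gives no proof of this statement; it imports it from \cite{chen2008upper}, so there is no in-text argument to compare with. Your reading $k=d+r$ is the right one. It is also what produces the count $O(R'(kl)^3\log\frac{n}{kl})$ the paper derives with $d+r=kl$. Your architecture is sound:
\begin{itemize}
\item stacking $z$ copies gives $t(n,d,r;z]\le z\,t(n,d,r;1]$;
\item the Bernoulli$(r/k)$ weighting is a fractional cover of value exactly $1/q=(k/r)^r(k/d)^d$;
\item the greedy (Lov\'asz--Chv\'atal) bound gives $t(n,d,r;1]\le H(\Delta)/q\le(1+\ln\Delta)/q$.
\end{itemize}

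The gap is the last step, which you leave open, and the route you sketch for it cannot reach the constant as written. Bounding $\binom{w}{r}\binom{n-w}{d}\le(ew/r)^r\bigl(e(n-w)/d\bigr)^d$ and maximizing at $w=rn/k$ gives only $\ln\Delta\le k\bigl(1+\ln\frac{n}{k}\bigr)$. This exceeds the required $k\bigl(1+\ln\frac{n}{k+1}\bigr)$ by $k\ln\frac{k+1}{k}\ge\ln 2$. That is more than the at most $1-\gamma\approx 0.42$ of slack available from $H(\Delta)\le\ln\Delta+\gamma+\frac{1}{2\Delta}$. Your estimate would suffice for a bracket with $\ln(\frac{n}{k}+1)$, but not for the one stated here.

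The weight-class fallback is unnecessary, and as stated it is unjustified. It replaces $q$ by the hypergeometric probability $\binom{n-k}{w-r}/\binom{n}{w}$, which changes the leading factor multiplicatively, and the additive $+1$ does not automatically cover such a change.

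The missing idea is Vandermonde's identity. For every $w$,
\[
\binom{w}{r}\binom{n-w}{d}\le\sum_j\binom{w}{j}\binom{n-w}{k-j}=\binom{n}{k},
\]
so
\[
\Delta\le\binom{n}{k}\le\frac{n^k}{k!}<\Bigl(\frac{en}{k+1}\Bigr)^k .
\]
The last step uses $k!>\bigl(\frac{k+1}{e}\bigr)^k$. To see this, let $a_k=k!\,e^k/(k+1)^k$. Then $a_1=e/2>1$ and $a_{k+1}/a_k=e\big/\bigl(1+\frac{1}{k+1}\bigr)^{k+1}>1$, so $a_k>1$ for all $k$. Hence $\ln\Delta<k\bigl(1+\ln\frac{n}{k+1}\bigr)$, and the strict inequality follows.

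Incidentally, the extra $\ln\binom{k}{r}$ you feared in the direct greedy count does not actually arise. Greedy only needs to push the uncovered count below $1/q$, which takes about $\frac{1}{q}\ln(q|\mathcal{U}|)$ rows. Moreover $q|\mathcal{U}|=q\binom{k}{r}\binom{n}{k}\le\binom{n}{k}$, because $q\binom{k}{r}$ is a binomial probability.
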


% In our case, distinguishing singletons requires a $(kl-1,1,R']$-disjunct matrix, and distinguishing doubletons requires a $(kl-2,2,R']$-disjunct matrix, where $R'$ is given by Lemma~\ref{lem:rep2}. To acheive out goal, the matrix $A$ can be obtained simply by concatenating  a $(kl-1,1,R']$-disjunct matrix and a $(kl-2,2,R']$-disjunct matrix. By Theorem~\ref{thm:drz}, $A$ has 

% \[
% O\!\left(R'(kl)^3\log\frac{n}{kl}\right)
% =
% O\!\left(k^3l^5\log(n)\log(kl\lambda)\right)
% \]
% rows.
% %\sout{where $\epsilon=0.001$}

% The overall measurement matrix will be obtained by concatenating
% \(
% A^{(1)},-A^{(1)},A,\text{ and }-A,
% \)
% where $A^{(1)}$ is the first-stage matrix from Algorithm~\ref{alg:2SUMLC} used to recover the union support. The decoding procedure follows the same frequency-based strategy as the two-stage scheme, except that all singleton and pair queries are now performed simultaneously. The complete construction and decoding algorithms are presented in Algorithms~\ref{alg:NA} and~\ref{alg:NAdec}.

In our setting, distinguishing singletons requires a $(kl-1,1,R']$-disjunct matrix, while distinguishing doubletons requires a $(kl-2,2,R']$-disjunct matrix, where $R'$ is given by Lemma~\ref{lem:rep2}. Thus, we can construct $A$ by concatenating a $(kl-1,1,R']$-disjunct matrix and a $(kl-2,2,R']$-disjunct matrix. By Theorem~\ref{thm:drz}, the resulting matrix $A$ has

$$
O\!\left(R'(kl)^3\log\frac{n}{kl}\right)
=
O\!\left(k^3l^5\log(n)\log(kl\lambda)\right)
$$
rows.

The overall measurement matrix is obtained by concatenating

$$
A^{(1)},-A^{(1)},A,-A,
$$
where $A^{(1)}$ is the first-stage matrix from Algorithm~\ref{alg:2SUMLC} used to recover the union support. The decoding procedure follows the same frequency-based strategy as in the two-stage scheme, except that all singleton and pair queries are performed simultaneously. The complete construction and decoding procedures are given in Algorithms~\ref{alg:NA} and~\ref{alg:NAdec}. 

\begin{algorithm}
% \scriptsize
\caption{Non-adaptive scheme of the MLC under Assumption~\ref{Assumption1}: Measurement matrix}\label{alg:NA}
\begin{algorithmic}[1]
    \Require $n$, $l$, $A$, $R$, and $R'$ from Algorithm~\ref{alg:2SUMLC}.
    \State Let $A^{(1)}:=\mathrm{Rep}(A,R)$. Let $A^{(D_1)}$ be a binary $(kl-1,1,R']$-disjunct matrix \cite{chen2008upper} and $A^{(D_2)}$ be a binary $(kl-2,2,R']$-disjunct matrix.
    
    % By \cite[Theorem 3.2]{chen2008upper}, the matrix $A_d:=[A^{(D_1)};A^{(D_2)}]\in\{0,1\}^{m_A\times n}$ exists with $m_A=O((kl)^3R'\log n)=O(k^3l^{5+\epsilon}\log (n) \log (kl\lambda))$.
    \State \Return $A_{na}:=[A^{(1)};-A^{(1)};A_d;-A_d]$.

\end{algorithmic}
\end{algorithm}

\begin{algorithm}
% \scriptsize
\caption{Non-adaptive scheme of the MLC under Assumption~\ref{Assumption1}: Decoding algorithm}\label{alg:NAdec}
\begin{algorithmic}[1]
    \Require $n,l,m',R,R',m_A,A',$ from Algorithm~\ref{alg:NA}. The test result vector $y$ from $A_{na}=[A^{(1)};-A^{(1)};A^{(D_1)};A^{(D_2)};-A^{(D_1)};-A^{(D_2)}]$ in Algorithm~\ref{alg:NA}, and parameter $\lambda>0$.   
    \State Partition $y=[y^{(1)};y^{(2)}]$, where $y^{(1)}\in\{-1,1\}^{2m'R}$,  $y^{(2)}\in\{-1,1\}^{2m_A}$. Further partition $y^{(1)}=[y_1^1;y_2^1;...;y_{m'}^1;y_1^2;y_2^2;...;y_{m'}^2]$ where $y_i\in \{-1,1\}^R$. Let $y^0:=[y_1^0;y_2^0;...;y_{m'}^0]\in\{0,1\}^{m'}$ s.t. $y_i^0=0$ if $y_i^1=y_i^2=1^R$, otherwise $y_i^0=1$. 
    \State Apply \cite[Algorithm~3]{li2026support} to $(A',y^0)$ and obtain the union support $\beta$. Let $A^{(D_1)}_\beta$ be the submatrix with column index set $\beta$ (see Basic Notations). Let $A^{(D_2)}_\beta$ be defined similarly. 
    \State By the ($kl-1,1,R'$] and ($kl-2,2,R'$]-disjunct properties of $A^{(D_1)}$ and $A^{(D_2)}$, respectively, there exists a submatrix of $[A^{(D_1)}_\beta;A^{(D_2)}_\beta;-A^{(D_1)}_\beta;-A^{(D_2)}_\beta]$ that equals $[\mathrm{Rep}(M_{n,\beta},R')_\beta;\mathrm{Rep}(-M_{n,\beta},R')_\beta]$, with changes of some non-zero values and some row permutation. Let $y^{(2)}_{perm}$ be the result vector corresponding to this submatrix. Partition $y^{(2)}_{perm}:=[y_1^1;y_2^1;...;y_{|\beta|}^1;y_{(1,2)}^1;y_{(1,3)}^1;...;y_{(|\beta|-1,|\beta|)}^1;y_1^2;y_2^2;...;y_{|\beta|}^2;y_{(1,2)}^2;y_{(1,3)}^2;...;y_{(|\beta|-1,|\beta|)}^2]$, where $y_i^{\{1,2\}},y_{(i,j)}^{\{1,2\}}\in \{-1,1\}^{R'}$, and initialize $S=\varnothing$ to collect singletons. 
    \For {$i=1,2,...,|\beta|$}
    \begin{scriptsize}
        \Comment{First, we find all singletons}
    \end{scriptsize}
    \State If $y_i^1$ and $y_i^2$ contain fewer than $\frac{1.5R'}{l}$ entries equal to $-1$ in total, then $S\leftarrow S\cup\{i\}$.
    \EndFor
    \State Initialize a set $B$ of $|S|$ vectors $\beta_s:=e_s\in\{0,1\}^n$ for $s\in S$, where $e_s$ is the $s$-th unit vector. 
    \For {each pair $\{\beta_i,\beta_j\}\subseteq B$, $i<j$}
    \begin{scriptsize}
        \Comment{Then, we check all pairs of singletons and merge them if needed}
    \end{scriptsize}
    \State If $y_{(i,j)}^1$ $\&$ $y_{(i,j)}^2$ contain $<\frac{1.5R'}{l}$ entries equal to $-1$ in total, then $\beta_i\leftarrow\beta_i+\beta_j$, $B\leftarrow B-\beta_j$. 
    \EndFor
    \For {each pair $(i,j)$ s.t. $i\in S$, $\beta_i\in B$ and $j\in \beta-S$}
    \begin{scriptsize}
        \Comment{Finally, we check all $(s,t)$ s.t. $s\in S$,$t\in\beta-S$.}
    \end{scriptsize}
    \State Let $F$ be s.t. the number of $-1$ entries of $y_j^1$ $\&$ $y_j^2$ in total is between $\frac{(F-0.5)R'}{l}$ and $\frac{(F+0.5)R'}{l}$. 
    \State If $y_{(i,j)}^1$ $\&$ $y_{(i,j)}^2$ contain fewer than $\frac{(F+0.5)R'}{l}$ entries equal to $-1$ in total, then $\beta_i\leftarrow\beta_i+e_j$.
    \EndFor
    \State \Return $B$ 
\end{algorithmic}
\end{algorithm}

We have the following main theorem for this section: 

\begin{theorem}\label{thm3}
    Let $B:=\{\beta_1,\beta_2,...,\beta_l\}$ be a set of unknown $k$-sparse vectors in $\mathbb{R}^n$ that satisfy Assumption~\ref{Assumption1}. Let 
    % \sout{$\epsilon=0.001$ and }
    $\lambda>0$,  there exists a non-adaptive algorithm (Algorithm~\ref{alg:NA},~\ref{alg:NAdec}) to recover the support of every candidate vector $\beta_i\in B$ with probability at least $1 - \frac{2}{\lambda}$, using 
    % \sout{$m=O(k^3l^{5+\epsilon}\log^2(n)\log(kl\lambda\log n))$} 
    $m=O(k^3l^5\log^2(n)\log(kl\lambda\log n))$ queries of the oracle $\mc{O}$ with a decoding complexity $O(klm)$. 
\end{theorem}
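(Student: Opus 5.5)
The proof reduces Theorem~\ref{thm3} to the analysis already done for Theorem~\ref{thm1}. It has three parts: the failure probability, the correctness of decoding on the good event, and the measurement and decoding counts.

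\emph{Failure probability.} Let $E_1$ be the global AC event for the first block $[\mathrm{Rep}(A,R);\mathrm{Rep}(-A,R)]$. By Lemma~\ref{lem:rep1}, $E_1$ fails with probability at most $1/\lambda$. On $E_1$, Lemma~\ref{lem:trueb} shows that \cite[Algorithm~3]{li2026support} applied to $(A,y^0)$ returns the true union support $\beta$, with $|\beta|\le kl$. This step is unaffected by non-adaptivity, since $A$ is fixed in advance.

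\emph{Extracting the second-stage measurements.} Apply the $(kl-1,1;R']$-disjunct property of $A^{(D_1)}$ to the columns indexed by $\beta$, padded with arbitrary extra columns if $|\beta|<kl$. For each $b\in\beta$ this gives $R'$ rows whose support meets $\beta$ exactly in $\{b\}$. Likewise, the $(kl-2,2;R']$-disjunct property of $A^{(D_2)}$ gives, for each pair $\{b,b'\}\subseteq\beta$, $R'$ rows whose support meets $\beta$ exactly in $\{b,b'\}$.

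Each such row $v$ gives a measurement $\sign\langle v,\beta_i\rangle$ that depends only on the coordinates of $v$ in $\beta$, because every $\beta_i$ vanishes outside $\beta$. Hence each row acts as a row of $M_{n,\beta}$, and together with the rows of $-A_d$ they reproduce $A^{(2)}$, up to row permutation and the nonzero values of the entries. This is precisely the submatrix claimed in Algorithm~\ref{alg:NAdec}.

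The main obstacle is the nonzero values of the entries. Disjunct matrices here are binary, so the selected rows have entries equal to $1$ on $\beta$, which matches $M_{n,\beta}$ exactly. Consequently the count of $-1$'s over $\pm v$ among the $2R'$ repetitions behaves, on each coordinate set, exactly as in the two-stage analysis. The sign structure is the same: pairing $v$ with $-v$ means a $-1$ appears in a given repetition iff the sampled $\beta_i$ has $\langle v,\beta_i\rangle\ne0$. The accidental-zero caveat, where a pair sums to zero, is inherited unchanged from the two-stage proof.

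\emph{Correctness on the good event.} Let $E_2$ be the event of Lemma~\ref{lem:rep2}, applied to these at most $2|\beta|^2\le 2k^2l^2$ Bernoulli counts. Conditional on $\beta$, it fails with probability below $1/\lambda$. A union bound over $E_1$ and $E_2$ gives overall success probability at least $1-2/\lambda$.

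On $E_1\cap E_2$, the frequency thresholds correctly identify the singletons, the merging of singleton pairs, and the memberships of the non-singleton coordinates, under Assumption~\ref{Assumption1}. The argument is verbatim that of Theorem~\ref{thm1}, since the decoding steps of Algorithm~\ref{alg:NAdec} coincide with those of Algorithm~\ref{alg:mlcdadec}.

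\emph{Counts.} The first block uses $2Rm'=O(k^2l^2\log^2 n\cdot l\log(kl\lambda\log n))$ measurements. The disjunct block uses $O(R'(kl)^3\log n)=O(k^3l^5\log n\log(kl\lambda))$ measurements, by Theorem~\ref{thm:drz}. Both are dominated by $O(k^3l^5\log^2(n)\log(kl\lambda\log n))$.

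For decoding, the first stage costs $O(klm')$. Locating the relevant rows costs at most $O(m)$ by scanning the columns of $\beta$, giving $O(klm)$ in total. The frequency tests cost $O(|\beta|^2R')$, and the final loop costs $O(|S|\,|\beta|\,R')$. Altogether this is $O(klm)$.
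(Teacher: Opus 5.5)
Your proposal is correct and follows the same route as the paper: recover $\beta$ via Lemmas~\ref{lem:rep1} and~\ref{lem:trueb}, then use the binary $(kl-1,1;R']$- and $(kl-2,2;R']$-disjunct blocks to extract rows meeting $\beta$ in exactly one coordinate or one pair, so the decoding analysis of Theorem~\ref{thm1} carries over, union-bound the two failure events, and count measurements and the $O(|\beta|\,m_A)$ row-location cost as the paper does (your account of the submatrix extraction is more explicit than the paper's). One caveat, inherited from the paper rather than introduced by you: the $v$- and $-v$-queries sample candidates independently, so the total number of $-1$'s is a sum of $2R'$ independent Bernoulli variables with mean $R'f/l$, not a sum of $R'$ Bernoulli$(f/l)$ variables as Lemma~\ref{lem:rep2} assumes (your sentence about ``the sampled $\beta_i$'' in a repetition is therefore not literally right); this halves the Hoeffding exponent, so $R'$ must be doubled for the exact $1-2/\lambda$ guarantee, while all $O(\cdot)$ bounds are unchanged.
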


%% file: 7_conclusion.tex
\section{Future work}\label{sec:conclusion}
% \xl{Several directions for future research remain open. While the present work focuses on achievability results, it is of interest to establish corresponding lower bounds, particularly to better understand the potential computational gap between adaptive and non-adaptive schemes. Another important direction is to relax the structural assumptions on the supports of the candidate vectors; for instance, one may consider settings in which the supports are only required to be distinct \cite{pal2021support,polyanskii2021learning}. Finally, extending the proposed framework to noisy settings is a natural and practically relevant avenue for further study. In particular, it would be of interest to analyze performance under the bounded Massart noise model~\cite{awasthi2016learning}, where each measurement outcome is independently flipped with probability $\eta < 1/2$.}

Several directions remain open for future work. First, while we focus on achievability, establishing corresponding lower bounds would clarify the fundamental limits and potential computational gap between adaptive and non-adaptive schemes. Second, it would be interesting to relax the support assumptions, such as considering settings where candidate supports are only required to be distinct~\cite{pal2021support,polyanskii2021learning}. Finally, extending our framework to noisy measurements, particularly under bounded Massart noise~\cite{awasthi2016learning}, is a natural direction.

%% file: Appendices.tex
\section{Appendices}
\subsection{Proof of Lemma~\ref{lem:rep1}}
% \begin{lemma}
%     With $R:=\lceil l\cdot\log(m'l\lambda)\rceil$ repetitions of each measurement, we can guarantee that $\sgn(\langle v_j, \beta_i\rangle)$ appear somewhere in the result vector for all $i\in[l]$ and $j\in[m']$, with failure probability $<\frac{1}{\lambda}$ for any $\lambda>0$. 
% \end{lemma}
\begin{proof}
    Let $r > 0$ and define $R := l \cdot r$. Consider a fixed candidate vector $\beta_1$. The probability that $R$ independent samples all miss $\beta_1$ is
\[
\left(1 - \frac{1}{l}\right)^R \leq e^{-R/l} = e^{-r}.
\]

By the union bound, the probability that at least one of the $l$ candidate vectors is missed after $R$ samples is at most
\[
l \cdot e^{-r}.
\]

\medskip

Since we will repeat all $m'$ measurements and their negations. Applying the union bound, the overall failure probability is at most
\[
2\cdot m' l \cdot e^{-r}.
\]

\medskip

To ensure that the failure probability is at most $\frac{1}{\lambda}$ for some $\lambda > 0$, it suffices to require
\[
2m' l \cdot e^{-r} \leq \frac{1}{\lambda}.
\]

Taking logarithms, this condition is equivalent to
\[
r \geq \log(2m' l \lambda).
\]

Therefore, recalling that $R = l r$, it suffices to take
\[
R \geq l \cdot \log(2m' l \lambda).
\]
\end{proof}

\subsection{Proof of Lemma~\ref{lem:trueb}}
\begin{proof}
%     \xl{
%     We refer to Algorithm~2 (measurement matrix construction) and Algorithm~3 (decoding algorithm) in~\cite{li2026support}. All italicized terms below follow the same meaning as in~\cite{li2026support}, and we adopt the same notation. In particular, write $A = [A'; A'']$.

% It suffices to show that every $b \in \beta$ is correctly identified, and that no \emph{false positives} are retained.

% \medskip

% \emph{First stage.}
% The goal of the first stage is to identify all \emph{singletons}. By the \emph{$(kl,1)$-distinguishable} property of the matrix $M$ used in Algorithm~2 (recall that $A'$ is the $U$-magnified version of $M$), for every $b \in \beta$, there exists a row $v$ of $M$ such that
% \[
% \supp(v) \cap \beta = \{b\}.
% \]

% Let $\beta_i \in B$ be any candidate vector with $b \in \supp(\beta_i)$ (such a vector must exist). Then 
% \[v \cdot \beta_i \neq 0 \Rightarrow \sign\langle v,\beta_i\rangle \text{ or }\sign\langle -v,\beta_i\rangle \text{ is } -1, \Rightarrow y^0_j=1 \text{ at correspoding positions }j.\]
% Combined with the singleton (the singleton here means the same as in \cite{li2026support}, not the singleton in our context) property of the coordinate $b$, we derive that $b$ is detected in the first stage. Therefore, all elements of $\beta$ are recovered after the first stage, possibly along with some false positives.
% }

We refer to Algorithms~2 and~3 of~\cite{li2026support}, which describe the construction of the measurement matrix and the corresponding decoding algorithm of the EDOCS-AE scheme, respectively. Throughout the proof, all italicized terms are used with the same meaning as in~\cite{li2026support}, and we adopt the notation therein. In particular, write $A=[A';A'']$.

It suffices to prove that every coordinate in $\beta$ is identified and that no \emph{false positives} remain after decoding.

\emph{First stage.}
The objective of the first stage is to identify all \emph{singletons} (in the sense of~\cite{li2026support}). Since the matrix $M$ used in~\cite[Algorithm~2]{li2026support} is $(kl,1)$-distinguishable~\cite[Definition~2]{li2026support}, and $A'$ is the $U$-magnified version of $M$~\cite[Definition~4]{li2026support}, for every $b\in\beta$ there exists a row $v$ of $M$ satisfying
\[
\supp(v)\cap\beta=\{b\}.
\]

Choose any candidate vector $\beta_i\in B$ with $b\in\supp(\beta_i)$, which exists by the definition of $\beta$. Then
\[
\langle v,\beta_i\rangle\neq0,
\]
and hence at least one of
\[
\sign(\langle v,\beta_i\rangle)
\quad\text{and}\quad
\sign(\langle -v,\beta_i\rangle)
\]
equals $-1$. By the definition of the tentative result vector in Section~\ref{sec:2SUMLC}, the corresponding entry of $y^0$ is therefore equal to $1$. Since $b$ is a singleton (in the sense of~\cite{li2026support}), the decoding algorithm identifies $b$ during the first stage. As this argument applies to every $b\in\beta$, all coordinates in $\beta$ are recovered after the first stage, possibly together with some false positives.

\medskip

\emph{Second stage.}
It remains to show that all true elements are retained and all false positives are removed.

Let
\[
\mathcal{C} := \bigcup_{b \in \beta} \supp(A''_b)
\]
denote the union of supports of columns of $A''$ indexed by $\beta$. Ideally, the result vector $y''$ would satisfy $\supp(y'') = \mathcal{C}$. However, due to possible accidental zeros, we only have $\supp(y'') \subseteq \mathcal{C}$.

For each $s \in [n]$, define
\[
\mathcal{C}_{-s} := \bigcup_{\substack{b \in \beta \\ b \neq s}} \supp(A''_b).
\]

\medskip

Now consider $b \in \beta$. By the $(n,m'',d'',kl,1,1/2)$ property of $A''$, where $m'' = O(k^2 l^2 \log n)$ and $d'' = O(kl \log n)$, we have
\[
|\supp(A''_b) \setminus \mathcal{C}_{-b}| \geq \frac{d''}{2}.
\]

All elements in $\supp(A''_b) \setminus \mathcal{C}_{-b}$ correspond to singletons and therefore must appear in $y''$. Hence,
\[
|\supp(y'') \cap \supp(A''_b)| \geq \frac{d''}{2},
\]
and $b$ is retained by the decoding rule.

\medskip

Next, consider $s \notin \beta$ (i.e. a false positive). Again by the same property of $A''$, we have
\[
|\mathcal{C} \cap \supp(A''_s)| < \frac{d''}{2}.
\]

Since $\supp(y'') \subseteq \mathcal{C}$, it follows that
\[
|\supp(y'') \cap \supp(A''_s)| < \frac{d''}{2},
\]
and hence $s$ is removed.
\end{proof}

\subsection{Proof of Lemma~\ref{lem:rep2}}
\begin{proof}
    % First note that we do not need to consider the case when $f^{(i)}=0$ or $f^{(i)}=l$, as in these cases, $X^{(i)}=0$ and $X^{(i)}=R'$ with probability one, respectively. 
    
    % By the Multiplicative Chernoff bound, let $x:=l-0.5$, and using the fact that $((1-\frac{0.5}{x})^x\cdot e^{0.5})^{x^{1+\epsilon}}<0.9$ for $\epsilon=0.001$ and $x\geq0.5$, we have
    % \[P\left(X^{(i)}>\frac{(f^{(i)}+0.5)R'}{l}\right)=P\left(X^{(i)}>\frac{f^{(i)}+0.5}{f^{(i)}}\cdot E[X^{(i)}]\right)\leq\left(\frac{e^{\frac{0.5}{f^{(i)}}}}{(\frac{f^{(i)}+0.5}{f^{(i)}})^{\frac{f^{(i)}+0.5}{f^{(i)}}}}\right)^{\frac{f^{(i)}R'}{l}}\]
    % \[\leq \left(\frac{e^{0.5}}{(1+\frac{0.5}{l-1})^{l-0.5}}\right)^\frac{R'}{l}<\left(\left(1-\frac{0.5}{x}\right)^x\cdot e^{0.5}\right)^{x^{1+\epsilon}\cdot r'}<0.9^{r'}.\]
    % Similarly, using the fact that $((1+\frac{0.5}{x})^x\cdot e^{-0.5})^{x^{1+\epsilon}}<0.93$ for $\epsilon=0.001$ and $x\geq0.5$, we have
    % \[
    % P\left(X^{(i)}<\frac{(f^{(i)}-0.5)R'}{l}\right)=P\left(X^{(i)}<\frac{f^{(i)}-0.5}{f^{(i)}}\cdot E[X^{(i)}]\right)\leq\left(\frac{e^{-\frac{0.5}{f^{(i)}}}}{(\frac{f^{(i)}-0.5}{f^{(i)}})^{\frac{f^{(i)}-0.5}{f^{(i)}}}}\right)^{\frac{f^{(i)}R'}{l}}\]
    % \[\leq \left(\frac{e^{-0.5}}{(1-\frac{0.5}{l})^{l-0.5}}\right)^\frac{R'}{l}<\left(\left(1+\frac{0.5}{x}\right)^{x}\cdot e^{-0.5}\right)^{x^{1+\epsilon}\cdot r'}<0.93^{r'}.
    % \]
    
    % By union bound, the overall failure probability is then $\leq 2\cdot 2|\beta|^2\cdot 0.93^{r'}\leq 4k^2l^2\cdot 0.93^{r'}$. For this probability to be smaller than $\frac{1}{\lambda}$, it suffices that $r'\geq 14\log(4k^2l^2\lambda)$. 

    Note that 
    \(E[X^{(i)}]=R'\cdot \frac{f^{(i)}}{l}\) and $ X_j^{(i)}\in\{0,1\}$.    
    Let $t:=\frac{0.5R'}{l}$, by Hoeffding's inequality, we have 
    \[P\left(\frac{(f^{(i)}-0.5)R'}{l}<X^{(i)}<\frac{(f^{(i)}+0.5)R'}{l}\text{ fails}\right)\]
    \[=P(|X^{(i)}-E[X^{(i)}]|\geq t)\leq 2\exp(-\frac{2t^2}{R'})=2\exp(-\frac{0.5R'}{l^2})<2\exp(-0.5r'). \]
    By union bound, the overall failure probability is then upper bounded by \[2\cdot 2|\beta|^2\cdot \exp(-0.5r')\]
    \[\leq 4k^2l^2\cdot \exp(-0.5r').\] 
    For this quantity to be smaller than $\frac{1}{\lambda}$, it suffices that \[r'\geq 2\log(4k^2l^2\lambda).\]

\end{proof}

\subsection{Proof of Theorem~\ref{thm1}}
\begin{proof}
We construct the first- and second-stage measurement matrices, $A^{(1)}$ and $A^{(2)}$, using Algorithm~\ref{alg:2SUMLC}. The decoding algorithm consists of Algorithm~\ref{alg:2SUMLC} followed by Algorithm~\ref{alg:mlcdadec}. The proof follows from the preceding discussion, together with an analysis of the accidental zero issue.

\emph{Correctness.}
In the first stage, we recover the union support
\[
\beta:=\bigcup_{i\in[l]}\supp(\beta_i)
\]
using an $R$-fold repetition of the one-bit compressed sensing measurement matrix $A$ and its negation $-A$. Two types of errors may occur:
\begin{itemize}
\item[(i)] the global AC event fails; or
\item[(ii)] an accidental zero occurs.
\end{itemize}
By Lemma~\ref{lem:rep1}, the first event occurs with probability at most $1/\lambda$, while Lemma~\ref{lem:trueb} rules out the second. Hence, the first stage correctly recovers $\beta$.

We now analyze the second stage. Before proceeding, we verify that Lemma~\ref{lem:rep2} applies throughout this stage. The first part examines $|\beta|$ singleton coordinates, each associated with a measurement vector and its negation, contributing $2|\beta|$ random variables. The second part examines pairs of singleton coordinates, contributing fewer than
\[
2\binom{|\beta|}{2}
\]
random variables. The final part examines pairs consisting of one singleton and one non-singleton coordinate, contributing fewer than another
\[
2\binom{|\beta|}{2}
\]
random variables. Therefore, the total number of random variables involved in the second stage is at most
\[
2|\beta|+4\binom{|\beta|}{2}
=2|\beta|^2,
\]
which is precisely the regime covered by Lemma~\ref{lem:rep2}. Thus, Lemma~\ref{lem:rep2} may be invoked throughout the remainder of the proof.

In the first part of the second stage, we identify the singleton coordinates in $\beta$ by examining their frequencies. For example, suppose coordinate $1$ is a singleton. Algorithm~\ref{alg:mlcdadec} repeats the measurement vector
\[
[1,0,0,\ldots,0]\in\{0,1\}^n
\]
and its negation
\[
[-1,0,0,\ldots,0]
\]
exactly $R'$ times each. Since coordinate $1$ belongs to exactly one candidate vector, the expected number of measurements equal to $-1$ is $\frac{R'}{l}$. Therefore, Lemma~\ref{lem:rep2} implies that the threshold $\frac{1.5R'}{l}$ correctly identifies the singleton set $S$.

In the second part, we examine every pair of singleton coordinates and merge them whenever they belong to the same candidate vector. Without loss of generality, suppose $1,2\in S$. Algorithm~\ref{alg:mlcdadec} repeats the measurement vectors
\[
[1,1,0,\ldots,0]
\quad\text{and}\quad
[-1,-1,0,\ldots,0]
\]
exactly $R'$ times each. If coordinates $1$ and $2$ belong to the same candidate vector, then an accidental zero causes every measurement corresponding to that vector to evaluate to $+1$, so the number of measurements equal to $-1$ is exactly zero; otherwise, its expectation is $\frac{R'}{l}$. If the two coordinates belong to different candidate vectors, then the expected number of measurements equal to $-1$ is $\frac{2R'}{l}$, regardless of the occurrence of an accidental zero. Hence, Lemma~\ref{lem:rep2} shows that the threshold $\frac{1.5R'}{l}$ distinguishes the two cases with high probability.

Finally, we examine every pair $(s,t)$ with $s\in S$ and $t\in\beta\setminus S$. Without loss of generality, suppose $1\in S$ and $2\in\beta\setminus S$. Let $F$ denote the quantity defined in line 14 of Algorithm~\ref{alg:mlcdadec}. By Lemma~\ref{lem:rep2},
\[
F=\mathrm{Freq}(2,B)
\]
with high probability. The same two cases as above apply. If coordinates $1$ and $2$ belong to the same candidate vector, then an accidental zero reduces the expected number of measurements equal to $-1$ from $\frac{FR'}{l}$ to $\frac{(F-1)R'}{l}$. Otherwise, the expected number of measurements equal to $-1$ is $\frac{(F+1)R'}{l}$, regardless of the occurrence of an accidental zero. Therefore, Lemma~\ref{lem:rep2} implies that the threshold
\[
\frac{(F+0.5)R'}{l}
\]
distinguishes the two cases with high probability.

At this point, we established the correctness of this scheme.

\emph{Number of measurements.}
The first stage uses
\[
m_1 := 2\cdot m' \cdot R = O\!\left(k^2 l^2 \log\!\left(\frac{n}{kl}\right)\log (n) \cdot \lceil l \log(m' l \lambda)\rceil\right)
= O\!\left(k^2 l^3 \log^2 (n) \log(kl \lambda \log n)\right)
\]
measurements.

The second stage uses at most
% \[
% 2|\beta|^2 \cdot 2R'
% = O\!\left(k^2 l^2 \cdot l^{2+\epsilon} \log(k^2 l^2 \lambda)\right)
% \]

\[
m_2:=2|\beta|^2 \cdot 2R'
= O\!\left(k^2 l^2 \cdot l^2 \log(k^2 l^2 \lambda)\right)=O\left(k^2l^4\log(kl\lambda)\right)
\]
measurements.  
% which is asymptotically smaller than $m_1$. 
Therefore, the total number of measurements is
% \[
% m = O\!\left(k^2 l^{4+\epsilon} \log^2 (n) \log(kl \lambda \log n)\right).
% \]

\[
m = m_1+m_2=O\!\left(k^2 l^4 \log^2 (n) \log(kl \lambda \log n)\right).
\]

\medskip

\emph{Decoding complexity.}
In the first stage, the decoder scans the result vector $y \in \{-1,1\}^{2m_1}$ and aggregates it into a vector $y^0 \in \{0,1\}^{m'}$. It then applies the EDOCS-AE decoding algorithm, which runs in time $O(kl m')$. Thus, the total complexity of the first stage is
\[
O\bigl(\max(m_1, kl m')\bigr) = O\!\left(k^3 l^3 \log^2 (n)\log(kl\lambda\log n)\right).
\]

In the second stage, Algorithm~\ref{alg:mlcdadec} consists of three nested loops. In the worst case, the loop over all pairs $\{\beta_{s_1}, \beta_{s_2}\} \subseteq B$ requires 
% \sout{$O(|\beta|^2\cdot R') = O(k^2 l^{4+\epsilon}\log(kl\lambda))$} 
$O(|\beta|^2\cdot R') = O(k^2 l^4\log(kl\lambda))$ time, while the remaining loops are asymptotically smaller.

\medskip

Combining both stages, the overall decoding complexity is
% \[
% O\!\left(k^3 l^{4+\epsilon} \log^2 (n)\log(kl\lambda \log n)\right)=O(km).
% \]

\[
O\!\left(k^3 l^4\log^2 (n)\log(kl\lambda \log n)\right)=O(km).
\]

\end{proof}

\subsection{Proof of Theorem~\ref{thm2}}
\begin{proof}
    We construct the measurement matrices and perform decoding using Algorithm~\ref{alg:MSPMLC}. The notation follows accordingly.

\medskip

\emph{Correctness.}
The correctness of the scheme can be proved in a similar way as that of Theorem~\ref{thm1}, since their construction and decoding algorithm are essentially identical.

\medskip

\emph{Number of measurements.}
In the first stage, the number of measurements is
\[
m^E \cdot R
= O\!\left(kl \cdot \frac{\log(kl)}{\log\log(kl)} \cdot \log n\right) \cdot R
= O\!\left(kl^2 \log(kl)\log (n) \log(kl\lambda \log n)\right).
\]

In the second stage, the number of measurements is
% \[
% |\beta| \cdot R'
% = O\!\left(kl \cdot l^{2+\epsilon} \log(k^2 l^2 \lambda)\right)
% = O\!\left(kl^{3+\epsilon} \log(kl\lambda)\right).
% \]

\[
|\beta| \cdot R'
= O\!\left(kl \cdot l^2 \log(k^2 l^2 \lambda)\right)
= O\!\left(kl^3\log(kl\lambda)\right).
\]

For the subsequent $\lceil \log\log(kl) \rceil$ stages, by~\cite[Theorem~1.1]{black2025learning}, the total number of measurements is bounded by
% \[
% 8(kl)^{1+\frac{1}{2^{\lceil\log\log(kl)\rceil}-1}}
% \cdot l^{1-\frac{1}{2^{\lceil\log\log(kl)\rceil}-1}}
% \cdot 3R'
% = O\!\left(kl^2 \cdot R'\right)
% = O\!\left(kl^{4+\epsilon} \log(kl\lambda)\right).
% \]

\[
8(kl)^{1+\frac{1}{2^{\lceil\log\log(kl)\rceil}-1}}
\cdot l^{1-\frac{1}{2^{\lceil\log\log(kl)\rceil}-1}}
\cdot 3R'
= O\!\left(kl^2 \cdot R'\right)
= O\!\left(kl^4\log(kl\lambda)\right).
\]

In the final stage, the number of measurements is
% \[
% 3l|\beta - S| \cdot R'
% = O\!\left(l \cdot kl \cdot l^{2+\epsilon} \log(k^2 l^2 \lambda)\right)
% = O\!\left(kl^{4+\epsilon} \log(kl\lambda)\right).
% \]

\[
3l|\beta - S| \cdot R'
= O\!\left(l \cdot kl \cdot l^2 \log(k^2 l^2 \lambda)\right)
= O\!\left(kl^4 \log(kl\lambda)\right).
\]

Combining all stages, the total number of measurements is
% \[
% m = O\!\left(kl^{4+\epsilon} \log(kl)\log (n) \log(kl\lambda \log n)\right).
% \]

\[
m = O\!\left(kl^4\log(kl)\log (n) \log(kl\lambda \log n)\right).
\]

\emph{Decoding complexity.}
In the first stage, the decoder scans the result vector $y^E \in \{-1,1\}^{2R m^E}$ and aggregates it into $y^0 \in \{0,1\}^{m^E}$. It then applies the EDOCS-EE decoding algorithm~\cite[Algorithm~5]{li2026support}, which runs in time $O(m^E)$. Thus, the total complexity of the first stage is
\[
O(m^E \cdot R)
= O\!\left(kl^2 \log(kl)\log (n) \log(kl\lambda \log n)\right).
\]

For all subsequent stages, the decoding procedures are linear in the number of measurements. Therefore, their total complexity is asymptotically bounded by the total number of measurements.

\medskip

Hence, the overall decoding complexity is
$O(m)$.
\end{proof}

\subsection{Proof of Theorem~\ref{thm3}}
\begin{proof}
    We will use Algorithm~\ref{alg:NA} to construct our measurement matrices and Algorithm~\ref{alg:NAdec} to decode. The notations also follow. 

    The correctness of the scheme follows directly, as this scheme is essentially the same to that in Theorem~\ref{thm1}.

    In terms of the number of measurements, the matrix $A^{(1)}$ uses $m_1=O(k^2l^3\log^2(n)\log(kl\lambda\log n))$ measurements, and the matrix $A=[A^{(D_1)};A^{(D_2)}]$ uses % \sout{$O(k^3l^{5+\epsilon}\log(n)\log(kl\lambda))$} 
    $O(k^3l^5\log(n)\log(kl\lambda))$ measurements (Justified by Algorithm~\ref{alg:NA}). Therefore, the total number of measurement will be 
    %\[m=O(k^3l^{5+\epsilon}\log^2(n)\log(kl\lambda\log n)). \]
    \[m=O(k^3l^5\log^2(n)\log(kl\lambda\log n)). \]

    In terms of decoding complexity, the first part is the same as that of the first stage in Theorem~\ref{thm1}, i.e. 
    \[O(k^3l^3\log^2(n)\log(kl\lambda\log n)).\]

    In the second part, the decoder first finds a submatrix of $A_\beta$ that equals $\mathrm{Rep}(M_{n,\beta}',R')_\beta$. This process requires 
    % \[|\beta|\cdot m_A=O(kl\cdot k^3l^{5+\epsilon}\log(n)\log(kl\lambda))=O(k^4l^{6+\epsilon}\log(n)\log(kl\lambda))\]
    
    \[|\beta|\cdot m_A=O(kl\cdot k^3l^5\log(n)\log(kl\lambda))=O(k^4l^6\log(n)\log(kl\lambda))\]
    entries to be examined. 
    
    % \textcolor{blue}{A bit more may be done here. The column weight of the disjunct matrix in \cite{chen2008upper} should be bounded by a smaller value than the column length. 
    % \\ The disjunct matrix in \cite{chen2008upper} is constructed by constant-row-width (around $O(n/kl)$) vectors, so if this matrix is used, we can somehow only guarantee with high-probability that the decoding algorithm will have complexity only $O(m)$ instead of $O(klm)$. The logic is that the "density" of this matrix is $1/kl$. 
    % }

    Then, the decoder follows the process as in Algorithm~\ref{alg:mlcdadec}, with complexity only 
    %\sout{$O(k^2l^2\cdot R')=O(k^2l^{4+\epsilon}\log(kl\lambda))$} 
    \[O(k^2l^2\cdot R')=O(k^2l^4\log(kl\lambda)).\] 

    In summary, the overall decoding complexity will be 
    \(O(klm)\).
\end{proof}